\newtheorem{theorem}{Theorem}
\newtheorem{definition}{Definition}
\newtheorem{lemma}{Lemma}
\newtheorem{corollary}{Corollary}
\providecommand{\tuple}[1]{\ensuremath{\langle #1 \rangle}}
\DeclareMathOperator{\append}{\oplus\;}
\providecommand{\nin}[1]{\ensuremath{\not{#1}}}
\providecommand{\Adepmax}{\ensuremath{\alpha}}
\providecommand{\Amax}{\ensuremath{\actions^{max}}}
\providecommand{\Smax}{\ensuremath{\states^{max}}}
\providecommand{\Adep}{\ensuremath{\mathcal{\actions}}}
\providecommand{\act}{\ensuremath{a}}
\providecommand{\actions}{\ensuremath{A}}
\providecommand{\agents}{\ensuremath{N}}
\providecommand{\subagents}{\ensuremath{\agents'}}
\providecommand{\crg}{\ensuremath{\phi}}
\providecommand{\cri}{\ensuremath{\mathit{CRI}}}
\providecommand{\depreward}{\ensuremath{\reward^{e}}}
\providecommand{\deprewards}{\ensuremath{\rewards^{e}}}
\providecommand{\feature}{\ensuremath{f}}
\providecommand{\features}{\ensuremath{F}}
\providecommand{\infl}{\ensuremath{I}}
\providecommand{\Imax}{\ensuremath{\beta}}
\providecommand{\jact}{\ensuremath{\vec{\act}\@ifnextchar{^}{\,}{}}}
\providecommand{\mmdp}{\ensuremath{M}}
\providecommand{\newfeatures}{\ensuremath{\hat{\features}}}
\providecommand{\newstate}{\hat{\state}}
\providecommand{\noinfl}{\ensuremath{\diamond}}
\providecommand{\policy}{\ensuremath{\pi}}
\providecommand{\reward}{\ensuremath{R}}
\providecommand{\rewards}{\ensuremath{\mathcal{\reward}}}
\providecommand{\state}{\ensuremath{s}}
\providecommand{\states}{\ensuremath{S}}
\providecommand{\tr}{\ensuremath{\tau}}
\providecommand{\transprob}{\ensuremath{T}}
\providecommand{\transitions}{\ensuremath{\mathcal{T}}}
\providecommand{\prob}{\ensuremath{Pr}}
\providecommand{\ptvalue}{\ensuremath{V}}
\providecommand{\width}{\ensuremath{w}}
\providecommand{\pad}{\ensuremath{\theta}}
\providecommand{\mdpreturn}{\ensuremath{Z}}
\providecommand{\lb}{\ensuremath{L}}
\providecommand{\ub}{\ensuremath{U}}
\providecommand{\wildcard}{\ensuremath{*}}
\providecommand{\infl}{\theta^{i}}
\newcommand{\citet}[1]{\citeauthor{#1} [\citeyear{#1}]}
\def\docnote{This article is an extended version of the paper that was published under the same
title in the Proceedings of the Thirtieth AAAI Conference on Artificial Intelligence (AAAI16), held
in Phoenix, Arizona USA on February 12-17, 2016. The most significant difference is that here a more
strict definition of dependent actions, transition influence and, consequentially, the conditional
return graphs is given. Furthermore, this version contains additional details and explanations that
did not make it into the conference paper due to the page limit.}
\providecommand{\doctitle}{Solving Transition-Independent Multi-agent MDPs with Sparse
Interactions\\(Extended version)\footnote{\docnote}}
\begin{document}
\title{\doctitle}
\author{Joris Scharpff$^1$, Diederik M. Roijers$^2$, Frans A.
Oliehoek$^{2,3}$,\\
Matthijs T. J. Spaan$^1$, Mathijs M. de Weerdt$^1$\\
$ $ \\
$^1$ Delft University of Technology, The Netherlands \\
$^2$ University of Amsterdam, The Netherlands \\
$^3$ University of Liverpool, United Kingdom}
\date{}
\maketitle

\begin{abstract}
\noindent In cooperative multi-agent sequential decision making under uncertainty, agents must
coordinate to find an optimal joint policy that maximises joint value. Typical algorithms exploit additive
structure in the value function, but in the fully-observable multi-agent MDP (MMDP) setting such
structure is not present. We propose a new optimal solver for transition-independent MMDPs, in which
agents can only affect their own state but their reward depends on joint transitions. We represent
these dependencies compactly in \emph{conditional return graphs (CRGs)}. Using CRGs the value of a
joint policy and the bounds on partially specified joint policies can be efficiently computed.
We propose CoRe, a novel branch-and-bound policy search algorithm building on CRGs. CoRe typically
requires less runtime than the available alternatives and finds solutions to previously unsolvable
problems.

\end{abstract}

\section{Introduction}
When cooperative teams of agents are planning in uncertain domains, they must coordinate to maximise
their (joint) team value. In several problem domains, such as traffic light control
\cite{Bakker10icis}, system monitoring \cite{Guestrin02}, multi-robot planning \cite{Messias13aaai}
or maintenance planning \cite{Scharpff2013mpp}, the full state of the environment is assumed to be
known to each agent. Such \emph{centralised} planning problems can be formalised as multi-agent
Markov decision processes (MMDPs) \cite{Boutilier1996mmdp}, in which the availability of complete
and perfect information leads to highly-coordinated policies. However, these models suffer from
exponential joint action spaces as well as a state that is typically exponential in the number of agents.

In problem domains with local observations, sub-classes of \emph{decentralised} models exist that
admit a value function that is exactly factored into additive components
\cite{Becker2003AAMAS,Nair2005ndpomdp,witwicki2010influence} and more general classes admit
upper bounds on the value function that are factored \cite{Oliehoek15ijcai}. In centralised models
however, the possibility of a factored value function can be ruled out in general: by observing the
full state, agents can predict the actions of others better than when only observing a local state. 
This in turn means that each agent's action should be conditioned on the full state and that 
the value function therefore also depends on the full state.

A class of problems that exhibits particular structure is that of task-based planning problems, such
as the \emph{maintenance planning problem} (MPP) from \cite{Scharpff2013mpp}. In the MPP every agent
needs to plan and complete its own set of road maintenance tasks at minimal (private) maintenance
cost. Each task is performed only once and may delay with a known probability.
As maintenance causes disruption to traffic, agents are collectively fined relative to the
(super-additive) hindrance from their \emph{joint actions}. Although agents plan autonomously, they
depend on others via these fines and must therefore coordinate. Still, such \emph{reward
interactions} are typically sparse: they apply only to certain combinations of maintenance tasks,
e.g.\ in the same area, and often involve only a few agents. Moreover, when an agent has performed
its maintenance tasks that potentially interfere with others, it will no longer interact with any of
the other agents.

Our main goal is to identify and exploit such structure in centralised models, for which we consider
\emph{transition independent} MMDPs (TI-MMDPs). In TI-MMDPS, agent rewards depend on joint states
and actions, but transition probabilities are individual. Our key insight is that we can exploit the
reward structure of TI-MMDPs by decomposing the \emph{returns} of all execution histories
(i.e., all possible state/action sequences from the initial time step to the
planning horizon) into components that depend on local states and actions.

We build on three key observations.
1) Contrary to the optimal value function, returns \emph{can} be decomposed without loss of
optimality, as they depend only on local states and actions of execution sequences.
This allows a compact representation of rewards and efficiently computable bounds on the optimal
policy value via a data structure we call the \emph{conditional return graph} (CRG).
2) In TI-MMDPs agent interactions are often sparse and/or local, for instance in the domains
mentioned initially, typically resulting in very compact CRGs.
3) In many (e.g.\ task-modelling) problems the state space is transient, i.e., states can only be
visited once, leading to a directed, acyclic transition graph. With our first two key
observations this often gives rise to \emph{conditional reward independence}, i.e.\ the absence
of further reward interactions, and enables agent decoupling during policy search.

Here we propose \emph{conditional return policy search} (CoRe), a branch-and-bound policy search
algorithm for TI-MMDPs employing CRGs, and show that it is effective when reward interactions
between agents are sparse. We evaluate CoRe on instances of the aforementioned MPP with uncertain
outcomes and very large state spaces. We demonstrate that CoRe evaluates only a fraction of the
policy search space and thus finds optimal policies for previously unsolvable instances and commonly
requires less runtime than its alternatives.

\section{Related work}
\label{sec:related-work}

Scalability is a major issue in multi-agent planning under
uncertainty. In response to this challenge, two important lines of
work have been developed. One line of work proposed
approximate solutions by imposing and exploiting an additive structure in
the value function \cite{Guestrin02}. This approach has been applied in a range of stochastic
planning settings, fully and partially observable alike, both from a single-agent perspective
 \cite{koller1999computing,parr1998flexible} and multi-agent
\cite{Guestrin2002context,meuleau1998solving,SparseCoopQ,Oliehoek13AAMAS}. The drawback of
such methods is that typically no bounds on the efficiency loss can be
given. We focus on optimal solutions, required to deal with strategic behaviour in a
mechanism~\cite{cavallo2012optimal,Scharpff2013mpp}.

This is part of another line of work that has not sacrificed optimality, but instead targets
sub-classes of problems with properties that can be exploited
\cite{Becker2003AAMAS,becker2004decentralized,mostafa2009offline,witwicki2010influence}. In
particular, several methods that exploit the same type of additive structure in the value
function have been shown exact, simply because value functions of the sub-class of
problems they address are guaranteed to have such shape
\cite{Nair2005ndpomdp,Oliehoek08AAMAS,Varakantham2007ndpomdp}. However, all these approaches are for
decentralised models in which actions are conditioned only on \emph{local} observations.
Consequentially, optimal policies for decentralised models typically yield lower value than
the optimal policies for their fully-observable counterparts (shown in our experiments).

Our focus is on transition-independent problems, suitable for multi-agent problems in which the
effects of activities of agents are (assumed) independent. In domains where agents directly
influence each other, e.g., by manipulating shared state variables, this assumption is violated.
Still, transition independence allows agent coordination at a task level, as in the MPP, and is both
practically relevant and not uncommon in literature
\cite{Becker2003AAMAS,Spaan06aamas,Melo11,Dibangoye13}.

Another type of interaction between agents is through limited (global) resources required for
certain actions. While this introduces a global coupling, some scalability is achievable
\cite{meuleau1998solving}. Whether context-specific and conditional agent independence remains
exploitable in the presence of such resources in TI-MMDPs is yet unclear. Additionally, there
exist also methods that exploit reward sparsity and independence but through a reinforcement learning
approach identifying `interaction states' \cite{Hauwere2012solving,Melo2009learning}. Although these
target a similar structure, learning implies that there are no guarantees on the solution quality
(until all states have been recognised as interaction states, which implies a brute-force solve of
the MMDP).

\section{Model} 

We consider a (fully-observable) \emph{transition-independent}, \emph{multi-agent} \emph{Markov
decision process}, or \emph{TI-MMDP}, with a finite horizon of length~$h$, and no discounting of
rewards.
\begin{definition}
A TI-MMDP is a tuple $\tuple{\agents, \states, \actions, \transprob, \rewards}$:
\begin{itemize}
  \def\itemcorr{\hspace{-3mm}}
	\item[] \itemcorr  $\agents = \{ 1, ..., n\}$ is a set of $n$ enumerated agents;  
	\item[] \itemcorr $\states = \states^1 \times ... \times \states^n$ is the agent-factored state
	space, which is the Cartesian product of $n$ factored states spaces $\states^i$
	(composed of features~$\feature \in \features$, i.e.\ $\state^i = \{ \feature^{i}_x, \feature^i_y,
	\ldots \} $);
	\item[]  \itemcorr $\actions = \actions^1 \times ... \times \actions^n$ is the joint action
	space, which is the Cartesian product of the $n$ local action spaces $\actions^i$;
	\item[] \itemcorr $\transprob( \state, \jact, \newstate ) = \prod_{i \in \agents}
	\transprob^i( \state^i, \act^i, \newstate^i)$ defines a transition probability, which is the product of
	the local transition probabilities due to transition independence; and
	\item[] \itemcorr $\rewards$ is the set of reward functions over transitions that we assume
	without loss of generality is structured as $\left\{ \depreward | e \subseteq \agents \right\}$. 
	When $e = \{ i \}$,  $\reward^i$ is the \emph{local reward} function for agent~$i$, and when $|e|>1$, $\depreward$ is called an \emph{interaction reward}.
	The total team reward per time step, given a joint state~$\state$, joint action~$\jact$ and
	new joint state~$\newstate$, is the sum of all rewards:
	\begin{equation}
	\reward(\state, \jact, \newstate) = \sum_{\depreward \in \rewards} \depreward( \{ \state^{j} \}_{j
	\in e}, \{ \jact^{j} \}_{j \in e}, \{ \newstate^j \}_{ j \in e } ).
	\end{equation}
\end{itemize}
\end{definition}

Two agents $i$ and $j$ are called \emph{dependent} when there exists a reward function with both
agents in its scope, e.g., a two-agent reward
$\reward^{i,j}(\{\state^i,\state^j\},\{\act^i,\act^j\}, \{\newstate^i, \newstate^j\})$ could
describe the super-additive hindrance that results when agents in the MPP do concurrent maintenance on two
nearby roads. We focus on problems with \emph{sparse interaction rewards}, i.e., reward
functions~$\depreward$ with non-zero rewards for a small subset of the local joint actions (e.g.,
$\Adep^{ij} \subset \actions^i \times \actions^j$) or only a few agents in its scope.
Of course, sparseness is not a binary property: the maximal number of actions with non-zero
interaction rewards and participating agents (respectively $\alpha$ and $\width$ in
Theorem~\ref{thm:space}) determine the level of sparsity. Note that this is not a restriction but
rather a classification of problems that benefit most from our approach.

The goal in a TI-MMDP is to find the optimal joint policy~$\policy^*$ of which the actions $\jact$
maximise the expected sum of rewards, expressed by the Bellman equation:
\begin{equation}\label{eq:optval} 
\ptvalue^{*}(\state_t) = \max_{\jact_t} \sum_{\state_{t+1} \in \states}
	\transprob(\state_t, \jact_t, \state_{t+1} ) \Big(
\sum_{\depreward\in\rewards} 
	\reward^e(\state_{t}^e, \jact_{t}^e,\state_{t+1}^e)+
	\ptvalue^{*}(\state_{t+1} )\Big).
\end{equation}

At the last timestep there are no future rewards, so $\ptvalue^*(\state_h) = 0$ for every~$\state_h
\in \states$. Although $\ptvalue^*(\state_t)$ can be computed through a series of maximisations over
the planning period, e.g.\ via dynamic programming \cite{puterman2014markov}, it cannot be written
as a sum of independent local value functions without losing optimality \cite{koller1999computing},\
i.e.\ $\ptvalue^*(\state) \not=\sum_{e} \ptvalue^{e,*}(\state^e)$.

Instead, we factor the \emph{returns} of \emph{execution sequences}, the sum of rewards obtained
from following state/action sequences, which is optimality preserving. We denote an execution
sequence up until time $t$ as $\pad_t = [\state_0,\jact_0,$ $...,\state_{t-1}, \jact_{t-1}, \state_t
]$ and its return is the sum of its rewards: $\sum_{x=0}^{t-1}
\reward(\state_{\pad,x},\jact_{\pad,x}, \state_{\pad,x + 1})$, where $\state_{\pad,x}$,
$\jact_{\pad,x}$ and $\state_{\pad,x + 1}$ respectively denote the state and joint action at
time~$x$, and the resulting state at time~$x + 1$ in this sequence. A seemingly trivial but
important observation is that the return of an execution sequence can be written as the sum of local
functions:
\begin{equation}\label{eq:returnDistr}
	\mdpreturn (\pad_t) = \sum_{\depreward \in \rewards}  \sum_{x=0}^{t-1}  \reward^e(\state_{\pad,x}^e,
	\jact_{\pad,x}^e,\state_{\pad,x+1}^e),
\end{equation}
where $\state_{\pad,x}^e$, $\jact_{\pad,x}^e$ and $\state_{\pad,x+1}^e$ denote local
states and actions from $\pad_t$ that are relevant for $\reward^e$. Contrary to the optimal value
function, \eqref{eq:returnDistr} is additive in the reward
components and can thus be computed locally.

Nonetheless, the return of \eqref{eq:returnDistr} does not directly give us the value of an
optimal policy. To compute the expected policy value using \eqref{eq:returnDistr}, we sum the
expected return of all future execution sequences~$\pad_h$ reachable under policy~$\policy$
starting at $\state_0$ (denoted $\pad_h | \policy, \state_0$):
\begin{align}
\ptvalue^\policy( \state_0 )
&= \sum_{\pad_h | \policy, \state_0 } \prob( \pad_h ) \mdpreturn( \pad_h )
= \sum_{\pad_h | \policy, \state_0 } \mdpreturn( \pad_h ) \prod_{t=0}^{h - 1} \transprob(
\state_{\pad, t}, \policy( \state_{\pad, t} ),  \state_{\pad, t+1} ) .
	\label{eq:optreturn}
\end{align}

Now, \eqref{eq:optreturn} is structured such that it expresses the value in terms of additively
factored terms ($\mdpreturn( \pad_h ) $). However, comparing \eqref{eq:optval} and
\eqref{eq:optreturn}, we see that the price for this is that we no longer are expressing the
\emph{optimal} value function, but that of a given policy~$\policy$. In fact, \eqref{eq:optreturn}
corresponds to an equation for \emph{policy evaluation}. It is thus not a basis for dynamic
programming, but it is usable for policy search. Although policy search methods have their own
problems in scaling to large problems, we show that the structure of \eqref{eq:optreturn} can be
leveraged.

In particular, because the return of an execution sequence can be decomposed into additive
components (Eq.~\ref{eq:returnDistr}), we decouple and store these returns locally in
\emph{conditional return graphs} (CRGs). The CRG is used in policy search to efficiently compute
\eqref{eq:optreturn} when, during evaluation, the transition probability~$\prob(\pad)$ of an
execution sequence~$\pad$ becomes known. Moreover, the returns stored in the CRG can be used to
bound the expected value of sequences, allowing branch-and-bound pruning. Finally, when constructing
CRGs it is possible to detect the absence future reward interactions between agents and thus
optimally decouple the policy search.

\section{Conditional Return Graphs}
We now partition the reward function into additive components~$\rewards_i$ and assign them to
agents. The \emph{local} reward for an agent~$i \in \agents$  is given by $\rewards_i =
\{\reward^i\} \cup \rewards_i^e$, where $\rewards_i^e$ are the interaction rewards assigned to $i$
(restricted to $\depreward$ where  $i\in e$). The sets $\rewards_i$ are disjoint sub-sets of the
reward functions~$\rewards$ such that together they again form the complete set of joint reward
functions. Note that such a partitioning can be done in many ways. In our preliminary experiments we
observed that for the maintenance planning domain balancing the number of functions per agent works
well. Nevertheless, further study is required to establish potentially better or more generic
assignment heuristics.

Given such a disjoint partition of rewards, a conditional return graph for agent~$i$ is a data
structure that represents all possible \emph{local} returns, for all possible \emph{local} execution
histories. Particularly, it is a directed acyclic graph (DAG) with a layer for every stage
$t=0,\dots,h-1$ of the decision process. Each layer contains nodes corresponding to the reachable
\emph{local states} $\state^i \in \states^i$ of agent~$i$ at that stage. As the goal is to include
interaction rewards, the CRG includes for every local state $\state^i$, local action $\act^i$, and 
successor state $\newstate^i$ a representation of all transitions $(\state^e, \jact^e, \newstate^e)$
for which $\state^i \in \state^e$, $\act_i\in\jact^e $, and $\newstate^i \in \newstate^e$.

While a direct representation of these transitions captures all the rewards possible, we can achieve
a much more compact representation by exploiting sparse interaction rewards, enabling us to group
many joint actions $\jact^e$ leading to the same rewards. Consider a two-agent example $\agents =
\{1, 2\}$ with actions~$\actions^1 = \{ a^1 \}$ and $\actions^2 = \{ a^2, b^2, c^2 \}$ respectively.
Both agents have a local reward function, resp.\ $\reward^1$ and $\reward^2$, and there is one
interaction reward~$\reward^{1,2}$ that is $0$ for all transitions but the ones involving joint
action~$\{a^1, a^2\}$. This could for example be a network cost function of MPP that is non-zero
when the maintenance activities corresponding to actions~$a^1$ and $a^2$ are performed, e.g.\
because they take place within close proximity of each other. The interaction reward~$\reward^{1,2}$
is assigned to agent~$1$, thus $\rewards_i^1 = \{ \reward^i, \reward^{1,2} \}$ and of course
$\rewards_i^2 = \{ \reward^2 \}$. A naive representation of all rewards would result in the graph of
Figure~\ref{fig:crgdag-1}, illustrating a transition from $\state^1_0$ to $\state^1_1$ where
agent~$2$ may remain in state~$\state^2_4$ or transition to state~$\state^2_6$.

\begin{figure}[t]
	\centering
	\def\figwidth{0.48\columnwidth}
	\begin{subfigure}[b]{\figwidth}		
		\includegraphics[width=\textwidth]{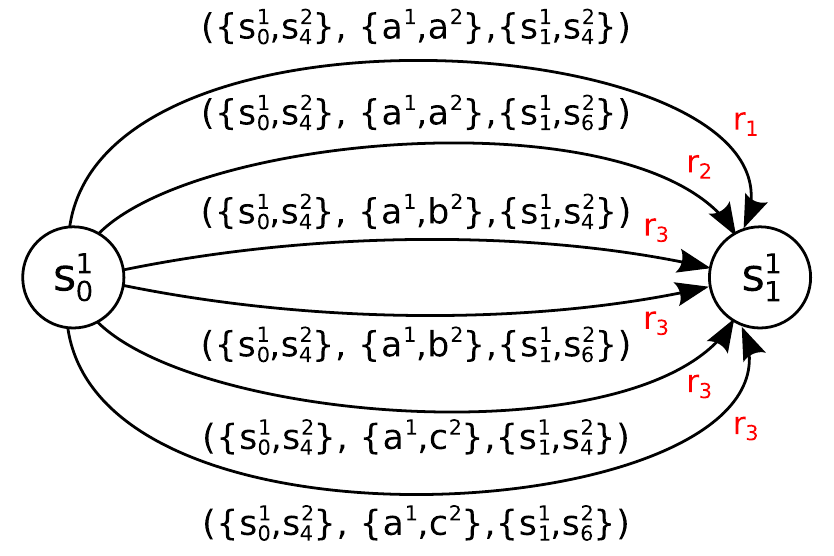}
		\caption{~}
		\label{fig:crgdag-1}
	\end{subfigure}
	\hspace{4mm}
	\begin{subfigure}[b]{\figwidth}		
		\includegraphics[width=\textwidth]{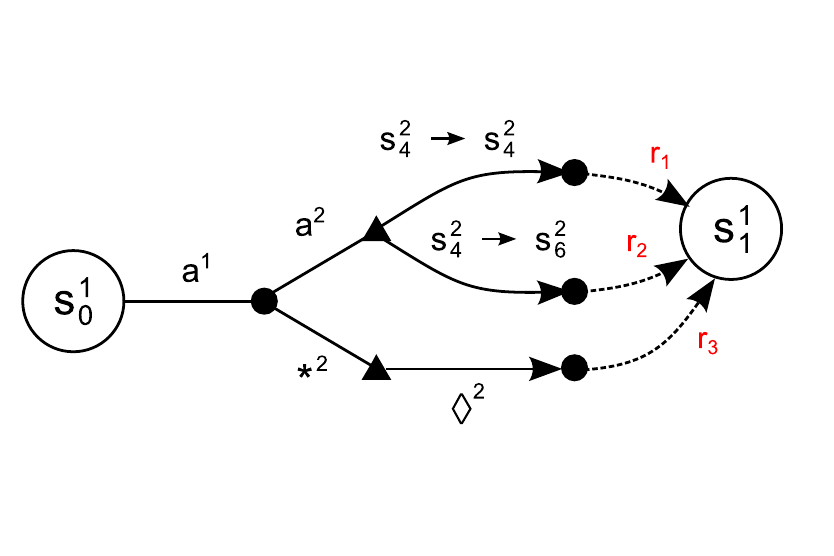}
		\caption{~}
		\label{fig:crgdag-2}
	\end{subfigure}
	\caption{Example of a transition for one agent of a two-agent problem where            
            (\subref{fig:crgdag-1}) shows the complete state/transition graph with unique
            rewards~$r_x$ and (\subref{fig:crgdag-2}) the equivalent but more compact CRG
            when~$\reward_1$ only depends on~$\act^2_1$.}
	\label{fig:crgdag}
\end{figure}

Observe that now there are only three unique rewards (shown in red) whereas there are six possible
transitions. Intuitively, all transitions resulting in the same reward should be grouped such that
only when the actions/states of other agents influence the reward they should be included in the
CRG, resulting in the graph of Figure~\ref{fig:crgdag-2}. To make this explicit, we denote a single
transition for agents~$e \subseteq \agents$ by $\tr^e = (\{ \state^{j} \}_{j \in e},$ $\{ \jact^{j}
\}_{j \in e}, \{ \newstate^j \}_{ j \in e }) = (\state^e, \jact^e, \newstate^e)$. A local
transition~$\tr^i$ is said to be contained in $\tr^e$, denoted $\tr^i \in \tr^e$, if $i \in e$,
$\state^i \in \state^e$, $\act^i \in \jact^e$ and $\newstate^i \in \newstate^e$. Moreover the set
set of all available (joint) transitions can be written as

\begin{equation}
	\transitions^e = \{ (\state^e, \jact^e, \newstate^e ) \, | \, \state^e, \newstate^e \in \{
	\state^{j} \}_{j \in e}, \jact^e \in \{ \jact^{j} \}_{j \in e}, \transprob( \state^e, \jact^e,
	\state^e ) > 0 \}
\end{equation}

Now we can formalise the set of actions (state transitions follow briefly afterwards) of other
agents that may interact with the rewards~$\rewards_i$, assigned to each of the CRGs, given a
current local transition~$\tr^i = (\state^i, \act^i, \newstate^i)$. An action~$\act^j$ of agent~$j
\neq i$ is said to be \emph{dependent} with respect to local transition~$\tr^i$ if it occurs in one
of the available joint transitions that contains~$\tr^i$, its presence influences the interaction
reward and there is at least one other action of agent~$j$ that does \emph{not} cause the same
interaction reward. The last condition is included to prevent marking all actions as dependent when
actually the interaction reward depends on the state transition of agent~$j$. This leads to the
following definition of \emph{dependent actions}:

\begin{definition}[Dependent Actions]
	The set of \emph{dependent actions} of an agent~$j \in
\agents$ that may reward-interact with agent~$i \neq j$ when agent~$i$'s local transition is $\tr^i
= (\state^i, \act^i, \newstate^i)$ is given by:
	\begin{align*}
		\Adep( \tr^i, j) = \{ \act^j \in \actions^j \, | & \, \exists \, \depreward \in
		\rewards_i, \, \exists \, \tr^e \in \transitions^{e}, \exists \, b^j
		\neq \act^j \in \actions^j \!: \\
		&~~ \tr^i \in \tr^e \land \act^j \in \jact^{e} \\
		&~~ \land \depreward( \tr^e ) \neq \depreward( \tr^e \setminus \{ \tr^j \} )
		, \hspace{30mm} \text{s.t.\ } \tr^j \in \tr^e \\
		&~~ \land \depreward( \tr^e ) \neq \depreward( \state^{e}, \jact^{e} \setminus
		\{\act^j\} \cup \{ b^j \}, \newstate^{e} ) \}
	\end{align*}
	\label{def:dependent-actions}
\end{definition}

Actions by other agents that are not dependent with respect to a transition~$\tr^i$, i.e.\ the
actions $\actions^j \setminus \Adep( \tr^i, j )$, are (made) anonymous in the CRG for agent~$i$
via `wildcards' (e.g.\ $\wildcard^2$ of Figure~\ref{fig:crgdag-2}), since they do not influence the
reward from the functions in $\reward^i$.

Besides actions, the interaction reward may also be affected by the state transitions of the other
agents. This is captured by the \emph{transition influence}.\footnote{In~\cite{Scharpff2013mpp} only
the set of dependent actions was defined. Although the definition in the paper is correct, for some
problems it may lead to an unnecessary blow-up of the CRG size. For instance, an interaction reward
assigned to agent~$i$ may actually depend on the state transition of another agent~$j$, regardless
of the action it performs. By the previous definition all actions of agent~$i$ would have been
marked dependent. Here we strengthen the previous definition by decoupling the influence of actions
and state transitions on the interaction reward, such that the CRGs constructed based on these
definitions are indeed of minimum size.} Its definition is rather similar to that of dependent
actions. For a state transition $\state^j \rightarrow \newstate^j$ of an agent~$j \neq i$ to be
considered an influence with respect to local transition~$\tr^i$, both states must be part of a
transition~$\tr^j$ such that both $\tr^i$ and $\tr^j$ are contained in a joint transition~$\tr^e$,
such a transition~$\tr^j$ must have an impact on at least one interaction reward and there must
exist at least one other transition of agent~$j$ that does not have the same interaction reward
impact. The latter condition is, as before, to prevent all state transitions being marked an
influence whereas the interaction reward depends solely on the action. This is formalised by the
following definition:

\begin{definition}
The set of state pairs of an agent~$j$ that may lead to a
reward interaction when agent~$i \neq j$ perform transition~$\tr^i = (\state^i, \act^i, \newstate^i)$
and agent~$j$ performs action~$\act_j \in \actions_j$ concurrently, is known as the
\emph{(transition) influence}, defined as
	\begin{align}
		\infl^i( \tr^i, \act^j )	= \{ (\state^j, &\newstate^j) \in
		\states^j \times \states^j \, | \, \exists \, \depreward \in \rewards_i, \exists \, \tr^e \in
		\transitions^e, \exists \, (\state_2^j, \newstate_2^j) \neq (\state^j, \newstate^j) \in
		\states^j \times \states^j \!:
		\nonumber \\
		& ~~ \tr^i \in \tr^e \land \tr^j = (\state^j, \act^j, \newstate^j) \in \tr^{e} \\
		& ~~ \land \depreward( \tr^e ) \neq \depreward( \tr^e \setminus \{ \tr^j \} )
		, \hspace{40mm} \text{s.t.\ } \tr^j \in \tr^e \\
		& ~~ \land  \depreward( \tr^e ) \neq \depreward( \state^e \setminus \{ \state^j \} \cup
		\{ \state_2^j \}, \jact^e, \newstate^e \setminus \{ \newstate^j \} \cup
		 \{ \newstate_2^j \} )
		\label{eq:influence-other}
		\}
	\end{align}
	\label{def:influence}
\end{definition}

Finally, for any set of actions~$\actions^j$ of an agent~$j$ we define the transition influence of
that set with respect to local transition~$\tr^i$ as the union of all influences, or $\infl^i(
\tr^i, \actions^j ) = \bigcup_{\act^j \in \actions^j } \infl^i( \tr^i, \act^j )$. This last
definition is useful to capture the influence of a wildcard set~$\wildcard^j$, which occurs when
multiple actions lead to the same state-transition interaction reward. Again, non-influencing state
transitions can be grouped. We use the symbol~$\noinfl^j$ to denote the set of all non-influencing
transitions of agent~$j$ given a local transition~$\tr^i$ and action~$\act^j$ (or
wildcard~$\wildcard^j$).

Given Defs.~\ref{def:dependent-actions} and \ref{def:reward-indep} above, a \emph{conditional return
graph} (CRG)~$\crg_i$ for agent~$i$ can be defined as follows.

\begin{definition}[Conditional Return Graph] \def\anode{v} \def\inode{u} Given a disjoint, complete
partitioning~$\rewards = \bigcup_{i \in \agents} \rewards_i$ of rewards over agents~$i \in \agents$,
the \emph{Conditional Return Graph (CRG)}~$\crg^i$ is a \emph{directed acyclic graph} with for every
stage $t$ of the decision process a node for every reachable local state~$\state^i \in \states$ and
for every available local transition~$\tr^i = (\state^i, \act^i, \newstate^i)$ a tree compactly
representing all joint transitions~$\tr^e = (\state^e, \jact^e, \newstate^e)$ of the agents~$e
\subseteq \agents$ in the scope of $\rewards_i$, or $e = \{ i \in \agents \, | \, \exists \depreward
\in \rewards_i\!: i \in e \}$.
 
The tree consists of two parts: an \emph{action tree} that specifies all dependent actions and an
\emph{influence tree} that contains the relevant local state transitions. For every action~$\act^i
\in \actions^i$ of agent~$i$, the state $\state^i$ is connected to the root node $\anode_{\act^i}$ of
an action tree by an arc labeled with action $\act^i$. For every root node $\anode_{\act^i}$, let
$\anode = \anode_{\act^i}$ be the root of an action tree such that it is defined recursively over the
remaining $\agents' = e \setminus \{ i \}$ agents as follows:

\begin{enumerate}[{A}1] \item If $\agents' \neq emptyset$ take some $j \in \agents'$, otherwise
stop.
\item For every $\act^j \in \Adep( \tr^i,  j)$, create an internal node~$\anode_{\act^j}$
connected from $\anode$ by an arc labeled with the action $\act^j$.
\item Create one internal node~$\anode_{\wildcard^j}$ to represent all actions of agent~$j$ not in
$\Adep( \tr^i, j)$ (if any), connected by an arc labeled by the `other action'
wildcard~$\wildcard^j$ from the root node~$\anode$.
\item For each leaf node~$\anode_{\act^j}$ (or $\anode_{\wildcard^j}$) that results from the previous
steps, create a sub-tree with $N'=N'\setminus\{j\}$ and $\anode = \anode_{\act^j}$ as its root using
the same procedure.
\end{enumerate}

When all action arcs have been created, each leaf node~$\anode_{\act^x}$ of the action tree is the
root node~$\inode$ of an influence tree, where $\act^x$ is either the last dependent action or
wildcard~$\wildcard^x$ of the agent~$x$ that is visited in the last recursion. Starting again from
$\agents' = e \setminus \{ i \}$:

\begin{enumerate}[{B}1] \item If $\agents' \neq emptyset$ take some $j \in \agents'$, otherwise
stop.
\item If the path from $\state^i$ to node~$\inode$ contains an arc labelled with action $\act^j \in
\Adep( \tr^i, j)$, create child nodes~$\inode_{\state^j \rightarrow \newstate^j}$ to represent all
local pairs of state transitions $(\state^j, \newstate^j)$ of agent~$j$ in the action
influence~$\infl^i( \tr^i, \act^j )$, connected to node~$\inode$ by arcs labeled $\state^j \rightarrow
\newstate^j$.

\textbf{else}

The path from $\state^i$ to node~$\inode$ contains the wildcard~$\wildcard^j$. Create child
nodes~$\inode_{\state^j \rightarrow \newstate^j}$ for all pairs of local state transitions~$(\state^j,
\newstate^j) \in \infl^i( \tr^i, \wildcard^j )$, i.e.\ the influence of the set of actions
represented by~$\wildcard^j$ (all $\act^j \notin \Adep( \tr^i, j)$), and connect them to
$\inode$ with arcs labelled $\state^j \rightarrow \newstate^j$.
\label{def:crg-infl-set}

\item If there remains any pair of local states $(\state^j, \newstate^j) \in \states^j \times
\states^j$ with $\transprob( \state^j, \act^j, \newstate^j ) > 0$ that is not in $\infl^i( \tr^i,
\act^j )$ or a pair with $\sum_{\act^j \in \wildcard^j} \transprob( \state^j, \act^j, \newstate^j )
> 0$ that is not in $\infl^i( \tr^i, \wildcard^j )$, depending on the action of agent~$j$ on the
path to node~$\inode$, create another child node~$\inode_{\noinfl^j}$ connected by an arc labeled by
the `other state pairs' wildcard~$\noinfl^j$.

\item For each leaf node~$\inode_{\state^j \rightarrow \newstate^j}$ (or $\inode_{\noinfl^j}$) that
results from the previous step, create a sub-tree with $\agents' = \agents' \setminus \{ j \}$ and
root~$\inode = \inode_{\state^j \rightarrow \newstate^j}$ (resp.\ $\inode = \inode_{\noinfl^j}$)
repeating the same procedure.
\end{enumerate}

 Finally, each leaf node~$\inode_{\state^x \rightarrow \newstate^x}$ ($x$ again being the last agent) of
 every influence tree is connected to the new local state node~$\newstate^i$ by an arc labeled with
 the transition reward~$\rewards_i( \tr^e )$ that corresponds to the actions and state pairs on
 the path from~$\state^i$ to $\newstate^i$.
 \label{def:crg}
 \end{definition}

The labels on the path to a leaf node of an influence tree, via a leaf node of the action tree,
sufficiently specify the joint transitions of the agents in scope of the functions $\reward^e \in
\rewards^i$, such that we can compute the reward $\sum_{\depreward\in\rewards_i}
\depreward(\state^e, \jact^e,\newstate^e)$. Note that for each $\depreward$ for which an action is
chosen that is not in $\Adep(\act^i,j)$ (a wildcard~$\wildcard^j$ in the action tree), the
interaction reward must be $0$ by definition (and similarly for state transitions in~$\noinfl^j$).

In Figure~\ref{fig:crgdag-2} an example CRG is illustrated. The local state nodes are displayed as
circles; the internal nodes as black dots and action tree leaves as black triangles. The action arcs
are labelled $a^1$, $a^2$ and `wildcard' $*^2$, whereas transition influence arcs are labelled
$(s^2_4 \rightarrow s^2_4)$, $(s^2_4 \rightarrow s^2_6)$ and $\noinfl^j$. Note that
Definition~\ref{def:crg} captures the general case, but often it suffices to consider
transitions~$(\state^i \cup \features^{e \setminus i}, \jact^e, \newstate^i \cup \newfeatures^{e
\setminus i})$, where $\features^{e \setminus i}$ is the set of state features on which the reward
functions $\rewards_i$ depend. This is a further abstraction: only feature influence arcs are
needed, typically requiring much less arcs (demonstrated later in Figure~\ref{fig:CRG}).

Now we investigate the maximal size of the CRGs to derive a theoretical upper bound.
Let $|\Smax| = \max_{i \in \agents} |\states^i|$ and $|\Amax| = \max_{i \in \agents} |\actions^i|$
denote respectively the maximal individual state and action space sizes, $\width = \max_{\depreward
\in \deprewards} |e| - 1$ denote the maximal interaction function scope size, $\Adepmax = \max_{i,j
\in \agents} \max_{\act^i \in \actions^i} |\Adep(\act^i, j)|$ the set size of the largest dependent
action set, and let finally $\Imax = \max_{i,j \in \agents} \max_{\tr^i \in \transitions^i}
\max_{\act^j \in \actions^j } | \infl( \tr^i, \act^j ) |$ denote the size of the largest transition
influence set. First note that the full joint policy search space is $\Theta( h |\Smax|^{2n}
|\Amax|^{n} )$, however we show that the use of CRGs can greatly reduce this:
\begin{theorem}
The maximal size of a CRG is 
\begin{equation}
O( ~ h \cdot |\Amax| |\Smax| ^ {2} \cdot (\Adepmax\Imax)^{\width} ~ ).
\label{eq:space}
\end{equation}
\vspace{-5mm}
\label{thm:space}
\end{theorem}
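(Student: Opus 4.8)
The plan is to bound the number of nodes of a CRG $\crg^i$ (equivalently the number of arcs, since every action tree and influence tree is a tree, so with $m$ nodes it has $m-1$ arcs) by reading the structure prescribed by Definition~\ref{def:crg} off in two stages: the outer DAG skeleton, and then the tree hanging off a single local transition. First I would count the skeleton. By Definition~\ref{def:crg} there is one layer per stage $t=0,\dots,h-1$, which contributes the factor $h$; each layer holds at most $|\Smax|$ local-state nodes $\state^i\in\states^i$; and from each node $\state^i$ one action arc is drawn for each of the $\le|\Amax|$ actions $\act^i\in\actions^i$, with the tree attached via $\act^i$ being indexed by the local transition $\tr^i=(\state^i,\act^i,\newstate^i)$ whose successor $\newstate^i$ ranges over the $\le|\Smax|$ states with $\transprob^i(\state^i,\act^i,\newstate^i)>0$. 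Hence there are at most $|\Amax|\,|\Smax|^{2}$ distinct trees in a layer and $O(h\,|\Amax|\,|\Smax|^{2})$ trees in total, so it remains only to show that each such tree has size $O((\Adepmax\Imax)^{\width})$.

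Next I would bound a single action-plus-influence tree for a fixed $\tr^i$. Its branching ranges over the agents $\agents'=e\setminus\{i\}$, where $e$ is the scope of $\rewards_i$, and $|\agents'|\le\width$ by the definition of $\width$. For the action tree (steps A1--A4), step A4 recurses with $\agents'\setminus\{j\}$, so each agent of $\agents'$ is branched on at most once; the depth is therefore at most $\width$, and at each level a node gains at most $|\Adep(\tr^i,j)|\le\Adepmax$ dependent-action children plus a single wildcard child $\wildcard^j$, i.e.\ branching $\le\Adepmax+1$, giving $O((\Adepmax+1)^{\width})$ action-tree nodes. Symmetrically, by steps B1--B4 each action-tree leaf roots an influence tree of depth $\le\width$ in which a node for agent $j$ gains at most $|\infl^i(\tr^i,\act^j)|\le\Imax$ (or $|\infl^i(\tr^i,\wildcard^j)|\le\Imax$) state-pair children plus a single $\noinfl^j$ child, i.e.\ branching $\le\Imax+1$, giving $O((\Imax+1)^{\width})$ nodes per influence tree; the reward-labelled arcs to $\newstate^i$ add only one arc per influence-tree leaf. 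Multiplying these two bounds and folding the constant-base (at most $4^{\width}$) factor coming from the two $+1$ wildcard terms into $O(\cdot)$ yields $O((\Adepmax\Imax)^{\width})$ per tree, and hence $O(h\,|\Amax|\,|\Smax|^{2}\,(\Adepmax\Imax)^{\width})$ for the whole CRG.

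I expect the \emph{main obstacle} to be making the tree-size argument rigorous rather than any arithmetic. One must justify that the non-deterministic choice of ``some $j\in\agents'$'' in A1/B1 cannot push the depth beyond $|\agents'|\le\width$ for any choice sequence; that the wildcard nodes $\wildcard^j$ and $\noinfl^j$ contribute only the additive $+1$ to each branching factor and nothing beyond the leaves (a wildcard arc in the action tree forces the corresponding $\depreward$ to $0$, and similarly for $\noinfl^j$); and that the relevant cardinalities are genuinely bounded by $\Adepmax$ and $\Imax$ — in particular the influence of a wildcard set is a union over its constituent actions, so strictly one needs $\Imax$ to be read as bounding $|\infl^i(\tr^i,\wildcard^j)|$ as well, otherwise an extra $|\Amax|^{\width}$ factor would creep into the influence-tree count. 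A minor point worth stating is that we may harmlessly overcount the reachable local states by the full $|\states^i|\le|\Smax|$, and that, since every attached structure is a tree, counting arcs separately from nodes is unnecessary.
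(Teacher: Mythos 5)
Your proof is correct and follows essentially the same counting as the paper's: $h$ layers, at most $|\Smax|^2$ (state, successor-state) pairs per layer, $|\Amax|$ actions per state, and for each resulting local transition a dependent-action tree of size $O(\Adepmax^\width)$ whose leaves carry influence trees of size $O(\Imax^\width)$. You are in fact more explicit than the paper's three-line argument, which silently drops the $+1$ wildcard/no-influence branches and the need to read $\Imax$ as also bounding $|\infl^i(\tr^i,\wildcard^j)|$ --- precisely the two bookkeeping points you flag as requiring care.
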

\begin{proof}
A CRG has as many layers as the planning horizon~$h$. In the worst case, in every stage there are 
$|\Smax|$ local state nodes, each connected to at most $|\Smax|$ next-stage local state nodes via
multiple arcs. The number of action arcs between two local state nodes~$\state^i$ and $\newstate^i$
is at most $|\actions^i|$ times the maximal number of dependent actions, $\Adepmax^\width$.
Finally, the number of influence arcs is bounded by $\Imax^\width$.
\end{proof}

Note that in general all actions and transitions can cause interaction rewards, in which case the
size of all $n$ CRGs combined is $O(n h |\Smax|^{2+2 \width} |\Amax|^{1+\width} )$; typically still
much more compact than the full joint policy search space unless $\width \approx |\agents|$. For
many problems however, the interaction rewards are more sparse and
$\Adepmax^{\width}\!\ll\!|\Amax|^{\width}$. Moreover, (\ref{eq:space}) gives an upper bound on the
CRG size in general, for a specific CRG~$\crg_i$ this bound is often expressed more tightly by $O( h
\cdot |\actions^i| |\states^i| ^ {2} \cdot \prod_{j \in \agents} (\Adepmax^i \Imax^i)^\width )$,
where $\Adepmax^i$ and $\Imax^i$ denote respectively the maximal dependent action and transition
influence set sizes for agent~$i$. Finally, each $|\states^i|$ can be reduced to $|\features^i|$
when conditioning on state features is sufficient.

\paragraph{Bounding the optimal value}
In addition to storing rewards compactly, we use CRGs to bound the optimal policy value.
Specifically, the maximal (resp. minimal) return from a joint state $\state_t$ onwards, is an upper
(resp. lower) bound on the attainable reward, later to combined with its probability to obtain the
expected value. Moreover, the sum of bounds on local returns bounds the global return and thus on
the globally optimal joint policy value. We define the bounds recursively:
\begin{equation}
	\ub( \state^i ) =	\max\limits_{ (\state^e, \jact^e_t, \newstate^e ) \in
	\crg_i(\state^i)} \left( \rewards_i( \state^e, \jact^e_t, \newstate^e ) + \ub( \newstate^i
	)\right),
	\label{eq:UB}
\end{equation}
such that $\crg_i( \state^i )$ denotes the set of local transitions available from state~$\state^i
\in \state^e$ (ending in $\newstate^i \in \newstate^e$) represented in CRG~$\crg_i$. The bound
on the optimal value for a joint transition $(\state, \jact, \newstate)$ of all agents is
\begin{equation}
	\ub ( \state, \jact_t, \newstate) = \sum_{i \in N} \left( \rewards_i( \state^e, \jact^e_t,
	\newstate^e ) + \ub( \newstate^i ) \right),
	\label{eq:boundTrans}
\end{equation}
and lower bound~$L$ is defined similarly over minimal returns.

\paragraph{Conditional Reward Independence}
Furthermore, CRGs can exploit independence in local reward functions as a result of past
decisions. In many task-modelling MMDPs, e.g.\ those mentioned in the introduction, actions can be
performed a limited amount of times, after which reward interactions involving that action no longer occur. When an agent can no longer
perform dependent actions, the expected value of the remaining decisions is found
through local optimisation. More generally, when dependencies between groups of agents no longer
occur, the policy search space can be decoupled into independent components for which a policy may
be found separately while their combination is still globally optimal.

\begin{definition}[Conditional Reward Independence] Given an execution sequence~$\pad_t$, two
agents~$i, j \in \agents$ are \emph{conditionally reward independent}, denoted $\cri(i,j,\pad_t)$,
if for all future states $\state_t, \state_{t+1} \in \states$ and every future joint action $\jact_t
\in \actions$:
\begin{align}
 	\forall \depreward \in \rewards \; s.t.\ \; \{i,j\} \subseteq e\!:
 	\sum_{x=t}^{h-1}\reward^e( \state_x, \jact_x, \state_{x+1} ) = 0.
\end{align}
\vspace{-3mm}
\label{def:reward-indep}
\end{definition}

Although reward independence is concluded from joint execution sequence~$\pad_t$, some
independence can be detected from the local execution sequence~$\pad^i_t$, for example when
agent~$i$ completes its dependent actions. This \emph{local conditional reward independence} occurs
when $\forall j \in N\!: \cri(i,j, \pad_t^i )$ and is easily detected from the state during CRG
generation. For each such state~$\state^i$, we find optimal policy~$\policy^*_i( \state^ i )$ and
add only the optimal transitions dictated by that policy to our CRG, further reducing the CRG
size.

\paragraph{Conditional Return Policy Search}
All the previous combined leads to the \emph{Conditional Return Policy Search (CoRe)}
(Algorithm~\ref{alg:core}).
CoRe performs a branch-and-bound search over the joint policy space, represented as a DAG with nodes
$\state_t$ and edges $\langle\jact_t,\newstate_{t+1}\rangle$, such that finding a joint policy
corresponds to selecting a subset of action arcs from the CRGs (corresponding to $\jact_t$ and
$\newstate_{t+1}$). First, however, the CRGs~$\crg_i$ are constructed for the local
rewards~$\rewards_i$ of each agent~$i \in \agents$, assigned heuristically to obtain the CRGs.
Preliminary experiments provided evidence that a balanced distribution of rewards over the CRGs
leads to the best results in the MPP domain. Further research is required to find effective
heuristics for other domains. The generation of the CRGs follows Definition~\ref{def:crg} using a
recursive procedure, during which we store bounds (Equation~\ref{eq:UB}) and flag local states that
are locally conditionally reward independent according to Definition~\ref{def:reward-indep}.
During the subsequent policy search CoRe detects when subsets of agents, $N' \subset N$, become
conditionally reward independent, and recurses on these subsets separately.

\def\currseq{\ensuremath{\pad^{\agents}_t}}
\begin{algorithm}[t] \smaller \SetAlgoLined \LinesNumbered

\KwIn{ CRGs $\Phi$, current execution sequence $\currseq$, planning horizon $h$,
agent (sub) set~$\agents$ }
\color{black}
\lIf{t = h}{\Return{$0$}\label{alg:core-terminal}}
$V^{*} \leftarrow 0$ \\
\ForEach{conditionally independent subset $\subagents \subseteq \agents$ given $\currseq$
\label{alg:core-indepsolve}}  {
 	\tcp{Compute weighted sums of bounds:}
	
	$\forall \jact_t^{N'}: \ub(\state_{\pad,t}^{N'},\jact_t^{N'})\leftarrow\sum_{\state_{t+1}^{N'}
	}\limits\transprob( \state_{\pad,t}^{N'}, \jact_t^{N'}, \state_{t+1}^{N'} )
	\ub(\state_{\pad,t}^{N'},\jact_t^{N'},\state_{t+1}^{N'})$ 
        \label{alg:core-upper}\\
	$\lb_{max}\leftarrow\max_{\jact_t}^{N'} \sum_{\state_{t+1}^{N'}}\limits
	\transprob( \state_{\pad,t}^{N'}, \jact_t^{N'}, \state_{t+1}^{N'} ) \lb(\state_{\pad,t}^{N'},\jact_t^{N'},\state_{t+1}^{N'})$\\
	\label{alg:core-lower}
 	\tcp{Find joint action maximising expected reward}
	\ForEach{$\jact_t^{N'}$ for which $\ub(\state_{\pad,t}^{N'},\jact_t^{N'}) \geq \lb_{max}$ \label{alg:core-prune}
	 } {  $V_{ \jact_t^{N'} } \leftarrow 0$ \\
		\ForEach{$\state_{t+1}^{N'}$ reachable from  $\state_{\pad,t}^{N'}$ and $\jact_t^{N'}$} {
			$V_{ \jact_t^{N'} } += \transprob( \state_{\pad,t}^{N'}, \jact_t^{N'}, \state_{t+1}^{N'} )
			\Big( R(\state_{\pad,t}^{N'}, \jact_t^{N'}, \state_{t+1}^{N'}) + 
			\mathtt{CoRe}( \Phi, {\currseq}' \append [\jact_t^{N'}, \state_{t+1}^{N'}], h,
			\subagents ) \; \Big)$\label{alg:core-recur}
		}

	$\lb_{max} \leftarrow \max( V_{ \jact_t^{N'} }, \lb_{max} )$ \tcp{update
	lb}
	\label{alg:core-updatelb} }
	$V^{*} += \max_{\jact_t^{N'}} V_{ \jact_t^{N'} }$ \label{alg:core-optval}\\
}
\Return{$V^{*}$ \label{alg:core-combine}}
\caption{${\tt CoRe}(\Phi,\currseq,h,\agents)$}
\label{alg:core}
\end{algorithm}

After construction of the CRGs, CoRe performs depth-first policy search
(Algorithm~\ref{alg:core}) over the (disjoint sub-)set of agents~$\agents$ with potential reward
interactions (line~\ref{alg:core-indepsolve}). These subsets are found with a connected component
algorithm on a graph with nodes~$\agents$ and an edge $(i,j)$ for every pair of agents~$i,j \in
\agents'$ that are still dependent given the current execution sequence~$\currseq$, or $\neg \cri(i,j,\currseq)$. In
lines~\ref{alg:core-indepsolve} to \ref{alg:core-optval}, we thus only consider local state
space~$\states^{\subagents} \subseteq \states$ and joint actions~$\jact \in
\actions^{\subagents}$\!\!. Lines~\ref{alg:core-upper} and \ref{alg:core-lower} determine the upper
and lower bounds for this subset of agents, retrieved from the CRGs, used to prune in
line~\ref{alg:core-prune}. For the remaining joint actions, CoRe recursively computes the expected
value by extending the current execution sequence with the joint action~$\jact_t$ and all possible
result states~$\state_{t+1}$ (line~\ref{alg:core-recur}), of which the highest is returned
(line~\ref{alg:core-optval}).
As an extra, the lower bound is tightened (if possible) after every evaluation
(line~\ref{alg:core-updatelb}).

\begin{theorem}[CoRe Correctness] Given TI-MMDP $\mmdp = \tuple{\agents, \states, \actions,
\transprob, \rewards}$ with (implicit) initial state~$\state_0$, CoRe always returns the optimal
MMDP policy value $\ptvalue^{*}( \state_0 )$ (Eq.~\ref{eq:optval}).
\label{thm:core}
\end{theorem}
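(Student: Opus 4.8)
The plan is to prove correctness by induction on the number of remaining stages $h - t$, showing that CoRe computes exactly the Bellman value $\ptvalue^*(\state_{\pad,t})$ for the agent set it is called on, given any execution sequence $\pad_t$ ending in $\state_{\pad,t}$. The argument has to establish three things in sequence: (i) the per-stage recursion in Algorithm~\ref{alg:core}, restricted to a single connected component, correctly implements the maximisation in \eqref{eq:optval}; (ii) the decomposition over conditionally independent subsets $\subagents$ on line~\ref{alg:core-indepsolve} does not change the value, i.e.\ the optimal joint policy restricted to those subsets factorises; and (iii) the branch-and-bound pruning on line~\ref{alg:core-prune} never discards a joint action that could be part of an optimal continuation.

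\medskip

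\noindent\textbf{Step 1 (base case and the raw recursion).} At $t=h$ the algorithm returns $0$ on line~\ref{alg:core-terminal}, matching $\ptvalue^*(\state_h)=0$. For the inductive step, first ignore the subset decomposition and the pruning: CoRe enumerates every joint action $\jact_t$, every reachable successor $\state_{t+1}$ with probability $\transprob(\state_{\pad,t},\jact_t,\state_{t+1})$, adds the immediate reward $R(\state_{\pad,t},\jact_t,\state_{t+1})$ (which by the model definition equals $\sum_{\depreward\in\rewards}\depreward(\state^e,\jact^e,\newstate^e)$, read off the CRG arcs as noted after Definition~\ref{def:crg}), plus the recursive value, and takes the max. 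By the induction hypothesis the recursive call returns $\ptvalue^*(\state_{t+1})$, so the expression computed is exactly the right-hand side of \eqref{eq:optval}. The key subtlety to spell out here is that, because transitions are independent and rewards depend only on \emph{local} states and actions along the sequence (Eq.~\ref{eq:returnDistr}), the Bellman optimality equation \eqref{eq:optval} remains valid even though the return has been decomposed; this is really the content of the earlier discussion contrasting \eqref{eq:optval} with \eqref{eq:optreturn}, and it licenses us to do stagewise maximisation at all.

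\medskip

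\noindent\textbf{Step 2 (independence decomposition).} Here I would argue that if $\agents$ splits into connected components $\subagents_1,\dots,\subagents_k$ under the dependency relation $\neg\cri(i,j,\currseq)$, then for all agents in different components there is \emph{no} interaction reward with non-zero contribution over stages $t,\dots,h-1$ (Definition~\ref{def:reward-indep}), hence the total future return is a sum of terms each involving only one component, and the transition probabilities already factorise. Therefore $\ptvalue^*(\state_{\pad,t}) = \sum_{l} \ptvalue^*(\state_{\pad,t}^{\subagents_l})$, and maximising each component independently (the outer loop accumulating into $V^*$ on lines~\ref{alg:core-indepsolve}--\ref{alg:core-optval}) yields the joint optimum. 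One must also check the bookkeeping: that the recursive call is passed the restricted sequence and subset $\subagents$, so the induction hypothesis applies to each component call with its own $\state_0$. An important detail is that conditional reward independence, once it holds along $\currseq$, persists (it is a statement about \emph{all} future states and actions), so no dependency can reappear deeper in the recursion and the decomposition is consistent.

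\medskip

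\noindent\textbf{Step 3 (soundness of pruning) — the main obstacle.} This is where the real work lies. I need $\ub$ and $\lb$, as defined by \eqref{eq:UB}--\eqref{eq:boundTrans} and then weighted by transition probabilities on lines~\ref{alg:core-upper}--\ref{alg:core-lower}, to be genuine upper and lower bounds on the expected value $V_{\jact_t^{N'}}$ of continuing with joint action $\jact_t^{N'}$. The bound property for a single agent follows by a backward induction on stages from \eqref{eq:UB}: $\ub(\state^i)$ dominates the maximal local return from $\state^i$; summing over agents in the component (\eqref{eq:boundTrans}) dominates the maximal joint return because the return is additive across the $\rewards_i$ partition (Eq.~\ref{eq:returnDistr}); and taking the probability-weighted combination preserves the inequality since each weight is a probability and the bound holds pathwise. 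The crux is then the standard branch-and-bound invariant: any $\jact_t^{N'}$ pruned on line~\ref{alg:core-prune} has $\ub(\state_{\pad,t}^{N'},\jact_t^{N'}) < \lb_{max} \le \max_{\jact} V_{\jact}$, so it cannot be the maximiser, and line~\ref{alg:core-updatelb} only ever raises $\lb_{max}$ to a value that is itself attained by some fully-evaluated action, so the max over the surviving actions still equals the true max. I would also need to confirm that local conditional reward independence detected during CRG construction (adding only the optimal local transitions) does not remove any transition needed for optimality — this is immediate since for such a state the local optimum is by definition part of every globally optimal policy. Assembling Steps 1--3 under the induction then gives $\ptvalue^{\mathtt{CoRe}}(\state_0)=\ptvalue^*(\state_0)$, completing the proof.
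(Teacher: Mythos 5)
Your proposal is correct and follows essentially the same route as the paper's appendix proof: decoupling of returns and values over conditionally reward independent agent subsets (Lemma~\ref{lem:cri} and Corollary~\ref{cor:decoupling}), admissibility of the summed per-agent bounds established by backward induction over stages (Lemma~\ref{lem:bounds}), and the standard branch-and-bound invariant that pruning on an admissible upper bound against an attained lower bound never discards a maximiser. Your Step~3 additionally makes explicit a point the paper's proof leaves implicit --- that restricting locally reward-independent CRG states to their locally optimal transitions cannot exclude a globally optimal continuation --- which is a worthwhile addition.
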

\begin{proof} (Proof Sketch) Conditional reward independence enables optimal decoupling of policy
search, the bounds are admissible with respect to the optimal policy value and our pruning does not
exclude optimal execution sequences. The full proof can be found in the Appendix.
\end{proof}

\section{CoRe Example}
We present a two-agent example problem in which both agents have actions $a, b$ and $c$, but every
action can be performed only once within a 2~step horizon. Action $c^2$ of agent~$2$ is (for ease of
exposition) the only stochastic action with outcomes $c$ and $c'$, and corresponding probabilities
$0.75$ and $0.25$. There is only one interaction, between actions~$a^1$ and $a^2$, and the reward
depends on feature~$\feature^1$ of agent~$1$ being set from $\feature^1?$ to $\feature^1$ or $\neg
\feature^1$. Thus we have one interaction reward function with rewards $\reward^{1,2}(\feature^1?,
\{a^1, a^2 \}, \feature^1 )$ and $\reward^{1,2}(\feature^1?, \{a^1, a^2 \}, \neg \feature^1 )$, and
local rewards $\reward^{1}$ and $\reward^{2}$. Without specifying actual rewards, we
demonstrate the CRGs and CoRe.

\begin{figure}[t] \centering \includegraphics[width=0.9\textwidth]{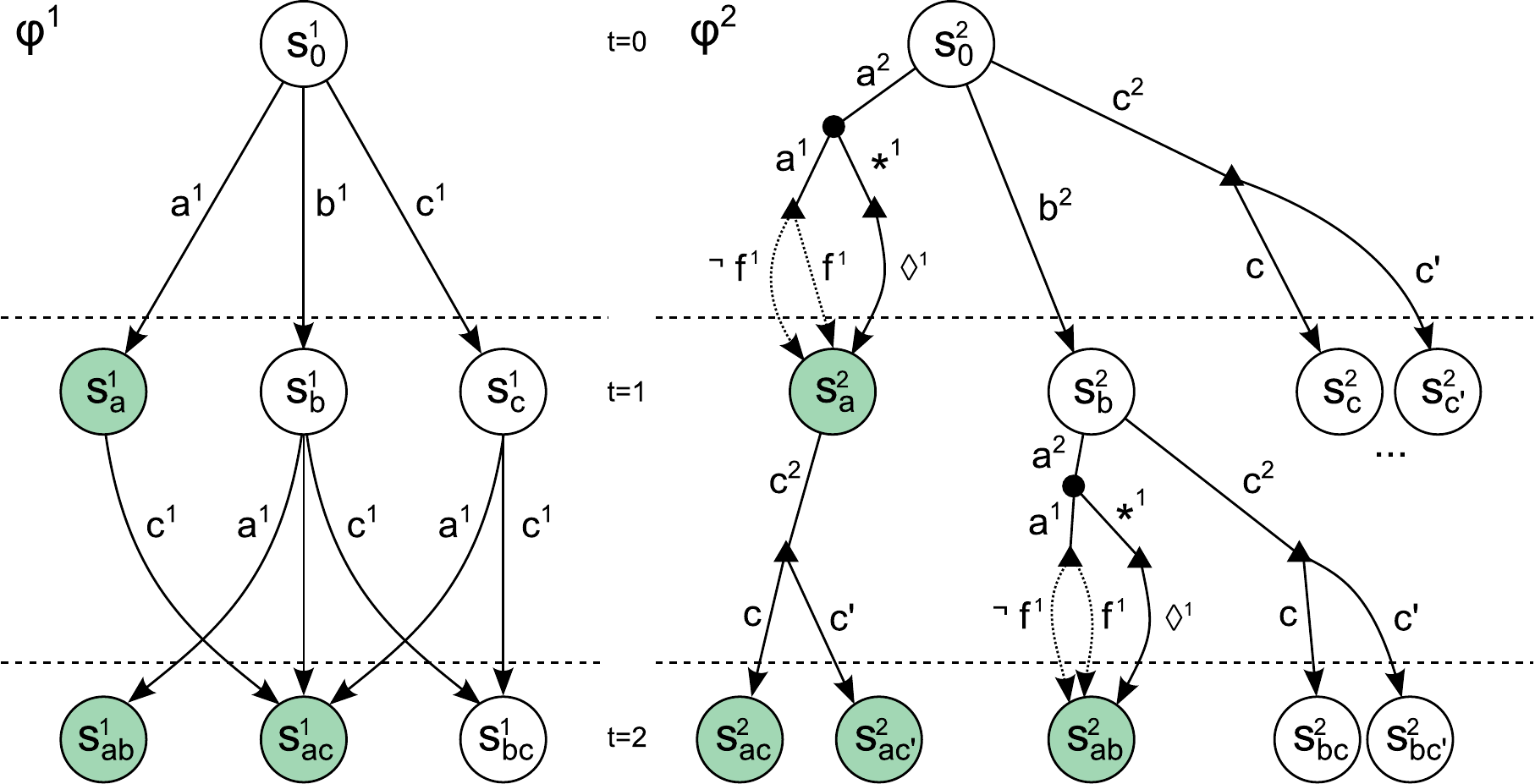} \caption{The CRGs of
the two agents. We omit the branches for $a^2$ and $b^2$ from states $\state^2_c$ and
$\state_{c'}^2$. The highlighted states are locally reward independent (reward arcs are omitted).}
	\label{fig:CRG}
\end{figure} 

Figure~\ref{fig:CRG} illustrates the two CRGs. On the left is the CRG~$\crg^1$ of agent~$1$ with
only its local reward~$\reward^1$, while the CRG of agent~2 includes both the reward interaction
function~$\reward^{1, 2}$ and its local reward~$\reward^2$. Notice that only when sequences start
with action~$a^2$ additional arcs are included in CRG~$\crg^2$ to account for reward interactions.
The sequence starting with~$a^2$ is followed by an after-state node with two arcs: one for agent~$1$
performing~$a^1$ and one for its other actions, $\wildcard^1 = \{ b^1, c^1\}$. The interaction
reward depends on what feature~$\feature^1$ is (stochastically) set to, thus the influence arcs
$\feature^1$ and $\neg \feature^1$ are now required. As the interaction reward only occurs when
$\{a^1, a^2\}$ is executed, the fully-specified after-state node after~$a^2$ and $\wildcard^1$ (the
triangle below it) has a no-influence arc~$\noinfl^1$. All other transitions are reward independent
and captured by local transitions $(\state^1_0, b^1, \state^1_b)$ and $(\state^1_0, c^1,
\state^1_c)$. Locally reward independent states are highlighted green and from each of these states
only the optimal action transition is kept in the CRG, e.g.\ only action arc~$c^1$ is included from
$\state_a^1$. This action was determined optimal from the local state by single-agent optimal policy
search, and thus no arcs for other actions (here $b^1$) are necessary.

Consider for example the state~$\state_{a}^1$ of $\crg^1$, in which action~$a^1$ has been taken in
the first time step. From this state onward, the reward interaction between action~$a^1$ and $a^2$
will no longer take place and therefore agent~$1$ is locally independent.
Consequentially, we can remove either the branch for action~$b^1$ or $c^1$ based on which action
maximises the expected value. In addition, this action will cause both agents to become independent
from each other because of the single dependency function between them.

\begin{figure}[t]
	\centering
	\includegraphics[width=0.9\textwidth]{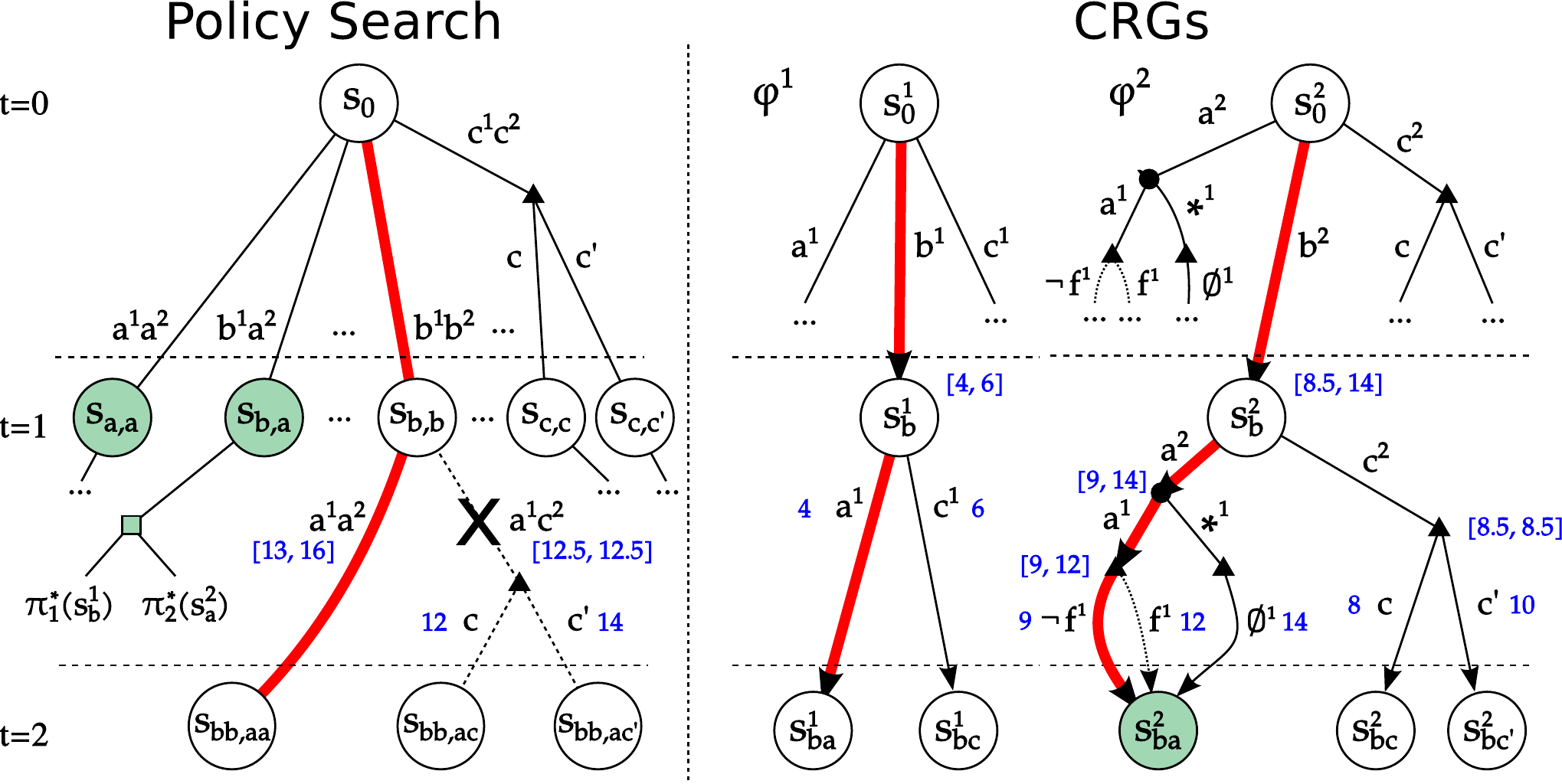}
	\caption{Example of policy evaluation. The left graph
	shows (a part of) the policy search tree with joint states and joint actions, and the right graph
	the	CRGs per agent. One possible execution sequence $\pad_h$ is highlighted in thick red.}
	\label{fig:core}
\end{figure}

\paragraph{Policy Search}
An example of CoRe policy search is shown in Figure~\ref{fig:core}, with the policy search space on
the left and the CRGs on the right, now annotated with return bounds. Only several of the branches
of the full DAG and CRGs are shown to preserve clarity. At $t=0$, there are 9 joint actions with 12
result states, while the CRGs need only 3 + 4 states and 3 + 6 transitions to represent all rewards.
The execution sequence $\pad_h$ that is evaluated is highlighted in thick red. This sequence starts
with non-dependent actions~$\{ b^1, b^2 \}$, resulting in joint state~$\state_{b,b}$ (ignore the
bounds in blue for now). The execution sequence at $t=1$ is thus $\pad_1 = [\state_0, \{ b^1, b^2
\}, \state_{b,b}]$. In the CRGs the corresponding transitions to states $\state^1_b$ and
$\state^2_b$ are shown. Now for $t=1$ CoRe is evaluating joint action~$\{a^1,a^2\}$ that is
reward-interacting and thus the value of state feature~$\feature^1$ is required to determine the
transition in $\crg^2$ (here chosen arbitrarily as $\neg \feature^1_x$). The corresponding execution
sequence (of agent 2) is therefore~$\pad^2_2 = [ \state^2_0, \{ b^1, b^2 \}, \state^2_b \cup
\{\feature^1?\}, \{ a^1, a^2 \}, \state^2_{ba} \cup \{\neg \feature^1\} ]$ If agent~1 had chosen
action~$c^1$ instead, we would traverse the branches~$\wildcard^1$ and $\noinfl^1$ leading to
state~$\state^2_{ba}$ without reward interactions.

Branch-and-bound is shown (in blue) for state~$\state_{b,b}$, with the rewards labelled on
transitions and their bounds at the nodes. The bounds for joint actions $\{ a^1, a^2 \}$ and $\{
a^1, c^2 \}$ are $[13, 16]$ and $[12.5, 12.5]$, respectively, found by summing the CRG bounds, hence
$\{a^1, c^2\}$ can be pruned. Note that we can compute the expected value of $\{ a^1, c^2 \}$ in the
CRG, but not that of $\{ a^1, a^2\}$. This is because agent~$2$ knows the transition probability of
action~$c^2$ but it does not know what value $\feature^1$ has during CRG generation or with what
probability~$a^1$ will be performed. Regardless, we can bound the return of action~$a^2$ over all
possible feature values, stored in $\crg^2$, and they can be updated as the probabilities become
known during policy search.

Conditional reward independence occurs in the green states of the policy search tree. After joint
action $\{ b^1, a^2 \}$, the agents will no longer interact ($a^2$ is completed and will not be
available anymore) and thus the problem is decoupled. From state~$\state_{b,a}$ CoRe finds optimal
policies $\policy_1^*( \state^1_b)$ and $\policy_2^*(\state^2_a)$ and combines them into the optimal
joint policy~$\policy^*( \state_{b,a}) = \langle \policy_1^*( \state^1_b), \policy_2^*(\state^2_a)
\rangle$ for the planning problem remaining from independent state~$\state_{b,a}$.

\section{Evaluation}
In our experiments we find optimal policies for the \emph{maintenance planning problem} (MPP, see
the introduction) that minimise the (time-dependent) maintenance costs and economic losses due to
traffic hindrance. In this problem agents represent contractors that participate in a mechanism and
thus it is essential that the planning is done optimally.

The problem can be modelled as an MDP, as is explained in full detail in \cite{Scharpff2013mpp}.
Here we only briefly outline the problem. Agents maintain a state with start and end times of their
maintenance tasks. Each of these tasks can be performed only once and agents can perform exactly one
task at a time (or do nothing). The individual rewards are given by maintenance costs that are task
and time dependent, while interaction rewards model network hindrance due to concurrent maintenance.
Maintenance costs are task and time dependent, while interaction rewards model network hindrance due
to concurrent maintenance. In this domain we conduct three experiments with CoRe to study 1) the
expected value when solving centrally versus decentralised methods, 2) the impact on the number of
joint actions evaluated and 3) the scalability in terms of agents.

\begin{figure}[tp] \centering
	\providecommand{\sfwidth}{0.48}
	\providecommand{\figwidth}{0.98\textwidth}
	\begin{subfigure}[b]{\sfwidth\textwidth}
		\centering
		\includegraphics[width=\figwidth]{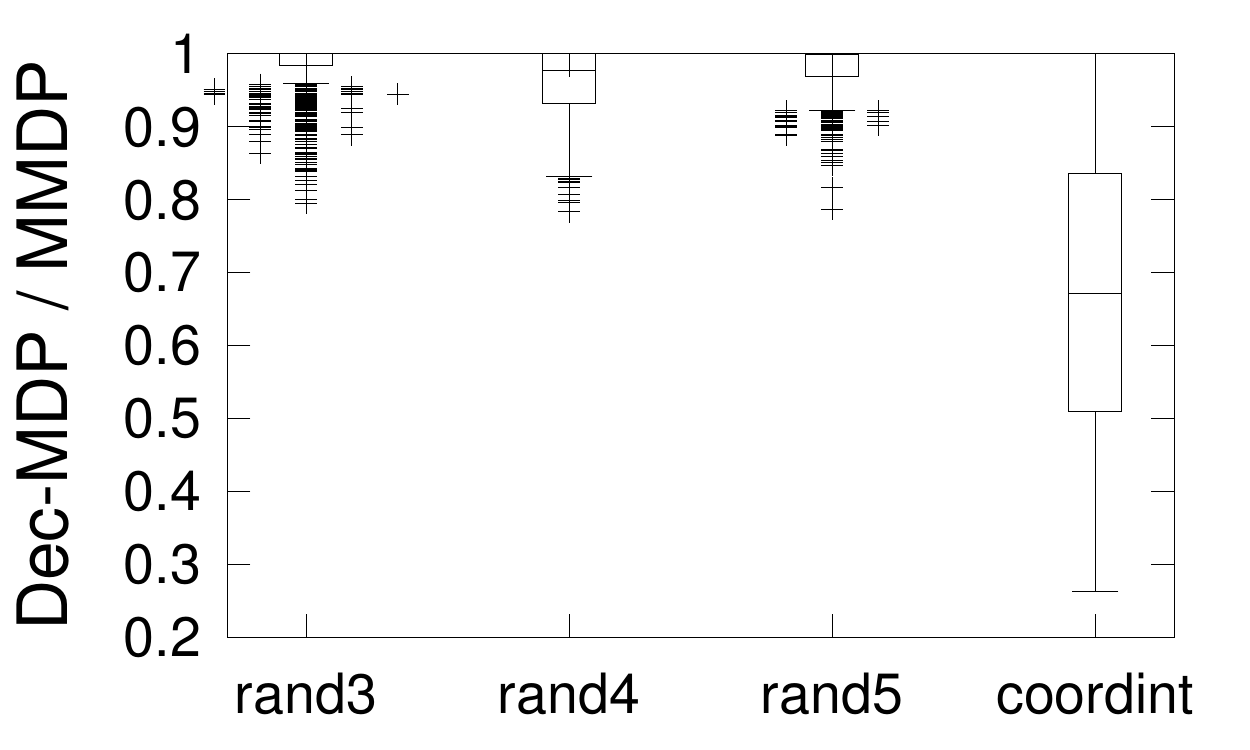}
		\caption{$\ptvalue^{\policy^*}_{\mathit{DEC}} / \ptvalue^{\policy^*}_{\mathit{MMDP}}$}
		\label{fig:decreward}
	\end{subfigure}
	\begin{subfigure}[b]{\sfwidth\textwidth}
		\centering
		\includegraphics[width=\figwidth]{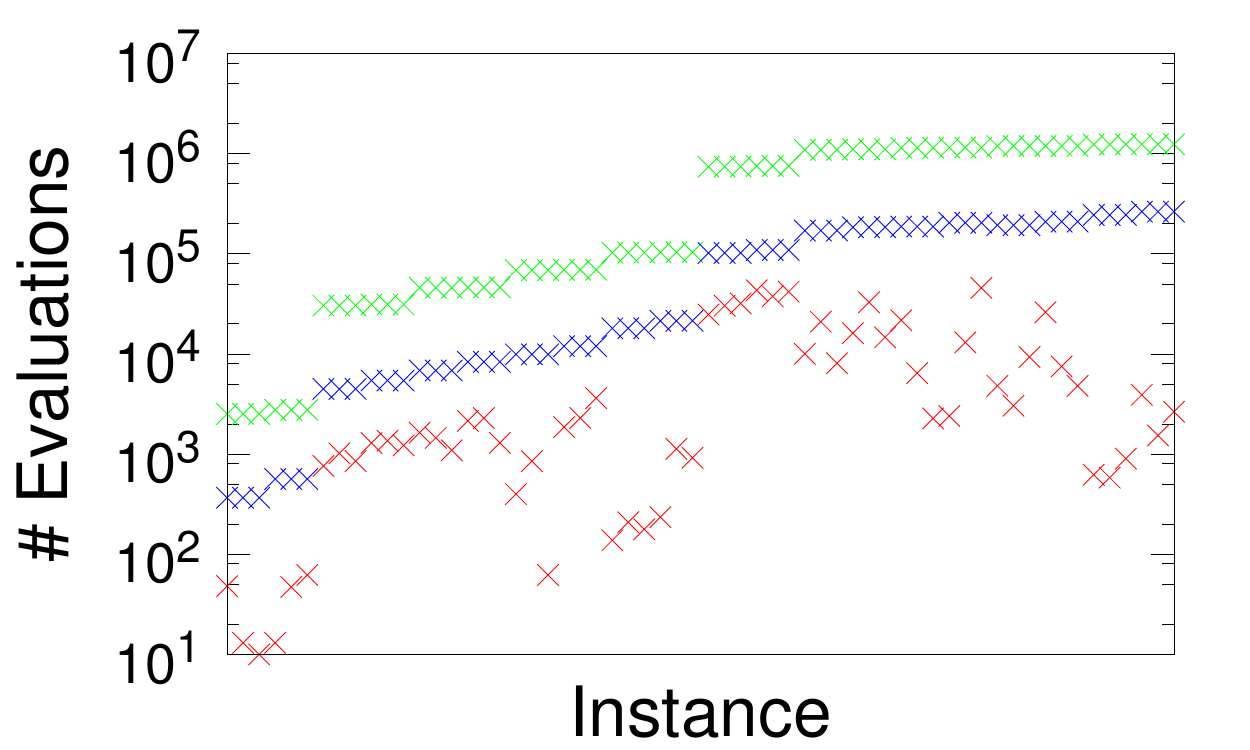}
		\caption{\# joint actions evaluated}
		\label{fig:jactions}
	\end{subfigure}
	\begin{subfigure}[b]{\sfwidth\textwidth}
		\centering
		\includegraphics[width=\figwidth]{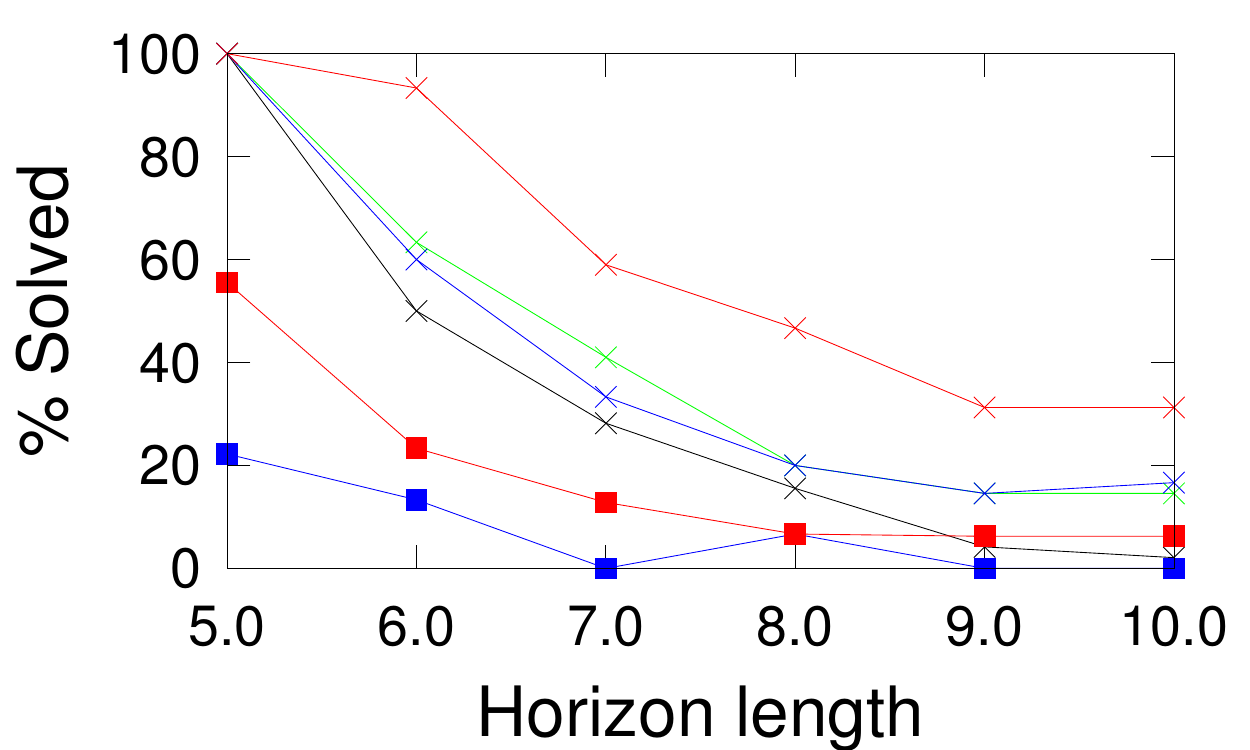}
		\caption{Perc.\ instances solved}
		\label{fig:numsolved}
	\end{subfigure}
	\begin{subfigure}[b]{\sfwidth\textwidth}
		\centering
		\includegraphics[width=\figwidth]{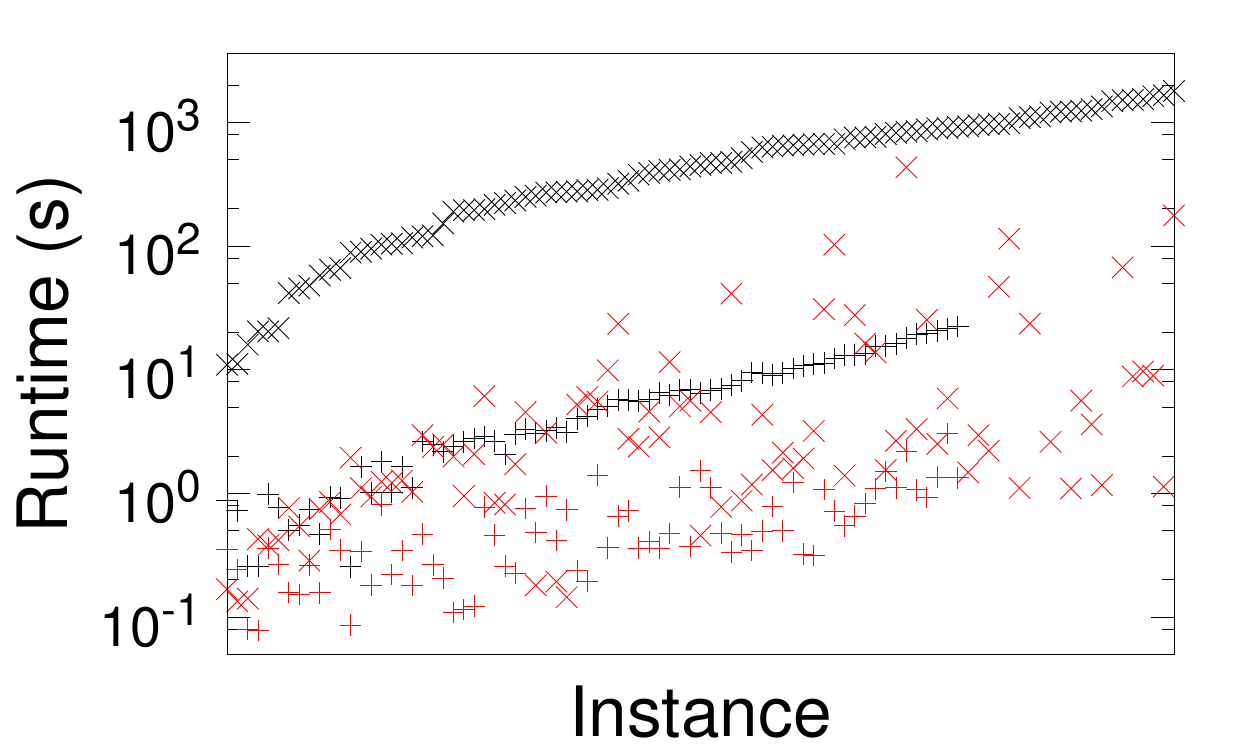}
		\caption{runtimes}
		\label{fig:runtime}	
	\end{subfigure}
	\begin{subfigure}[b]{\sfwidth\textwidth}
		\centering
		\includegraphics[width=\figwidth]{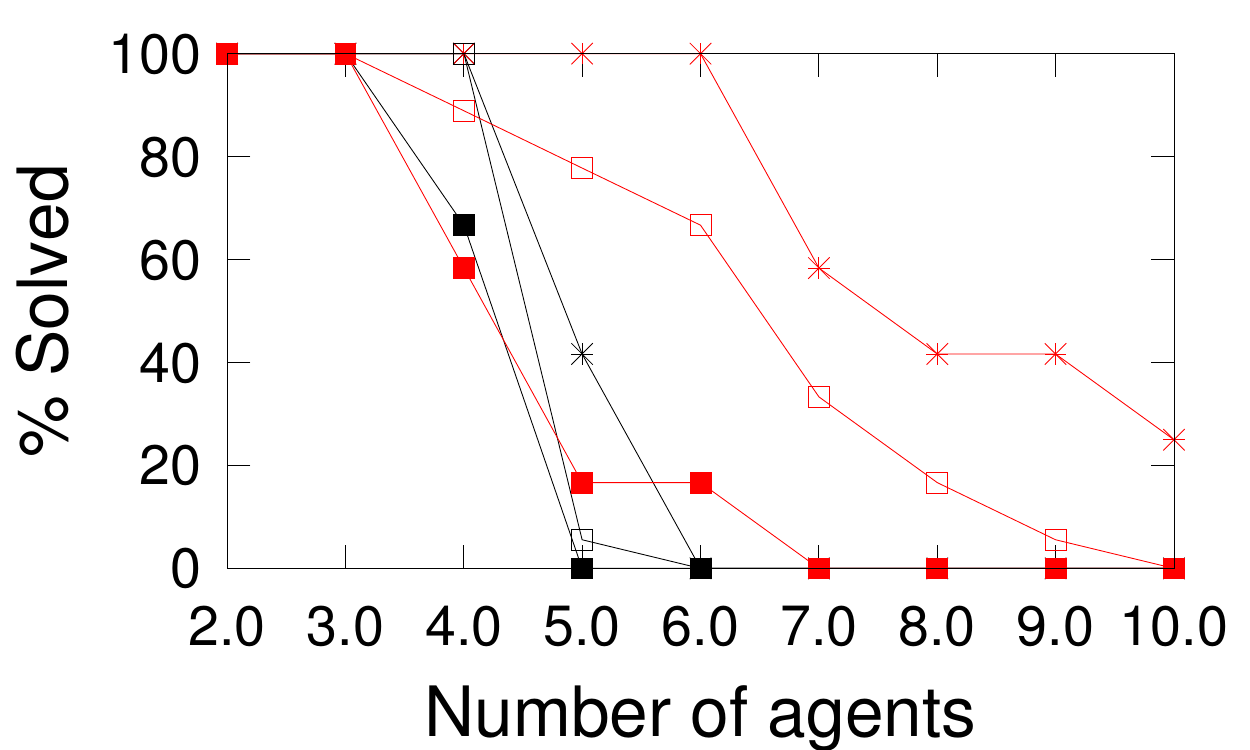}
		\caption{Perc.\ instances solved}
		\label{fig:pyra}	
	\end{subfigure}
	\begin{subfigure}[b]{\sfwidth\textwidth}
		\centering
		\includegraphics[width=0.9\textwidth]{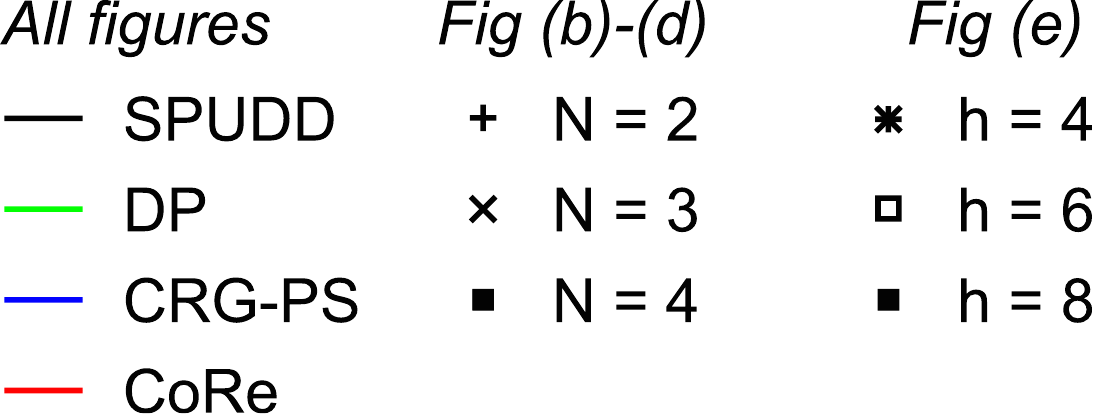}
		\vspace{13mm}
	\end{subfigure}
\caption{Experimental results: (\subref{fig:decreward}) The ratio of expected reward of the optimal
Dec-MDP policy versus the expected reward of the optimal MMDP policy for 1000 2-agent, 2-activity
instances of four sets of problems (3 sets of random problems {\tt rand[h]} with horizon 3, 4 and 5
and a set of coordination intensive instances {\tt coordint}), (\subref{fig:jactions}) Number of
joint actions evaluated by the dynamic programming algorithm (DP), CRG-enabled policy search (CRG-PS) and CoRe for the {\tt mpp} instances (log scale),
(\subref{fig:numsolved}) percentage of instances from {\tt mpp} that have been solved by each of the
algorithms within 30 minutes, (\subref{fig:runtime}) runtime comparison of SPUDD and CoRe for all
{\tt mpp} instances that were successfully solved by SPUDD (also log scale) and (\subref{fig:pyra})
the percentage of {\tt pyra} instances solved within the 30 minute time limit.}
	\label{fig:results}
\end{figure}
 
First, we compare with a decentralised baseline by treating the problem as a (transition and
observation independent) Dec-MDP \cite{Becker2003AAMAS} in which agents can only observe their
local state. Although the (TI-)Dec-MDP model is fundamentally different from the TI-MMDP -- in the
latter decisions are coordinated on \emph{joint} (i.e., global) observations -- the advances in
Dec-MDP solution methods \cite{Dibangoye13} may be useful for TI-MMDP problems if they can deliver
sufficient quality policies. That is, since they assume less information available, the value of
Dec-MDP policies will \emph{at best} equal that of their MMDP counterparts, but in practice the
expected value obtained from following a decentralised policy may be lower. We investigate if this
is the case in our first experiment, which compares the expected value of optimal MMDP policies found
by CoRe with optimal Dec-MDP policies, as found by the GMAA-ICE* algorithm \cite{Oliehoek13JAIR}. 

For this experiment we use two benchmark sets: {\tt rand[h]}, 3 sets of 1000 random two-agent
problems with horizons $h \in [3,4,5]$, and {\tt coordint}, a set of 1000 coordination-intensive
instances. The latter set contains tightly-coupled agents with dependencies constructed in such a
way that maintenance delays inevitably lead to hindrance unless agents coordinate their decisions
when such a delay becomes known, which is ar the first time step after starting the maintenance
task. Figure~\ref{fig:decreward} shows the ratio $\ptvalue^{\policy^*}_{\mathit{DEC}} /
\ptvalue^{\policy^*}_{\mathit{MMDP}}$ of the expected value of the optimal Dec-MDP
policy~$\ptvalue^{\policy^*}_{\mathit{DEC}}$ versus that of the optimal MMDP
policy~$\ptvalue^{\policy^*}_{\mathit{MMDP}}$. In the random instances the expected values of both policies
equal in approximately half of the instances. For coordination-intensive instances {\tt coordint}
decentralised policies result in worse results -- on average the reward loss is about $33\%$, but it
can be $75\%$ -- demonstrating that decentralised policies are inadequate for our purposes.
As this experiment only served to establish that decentralised methods are indeed not applicable,
from now on only centralised methods are considered.

In our remaining experiments we used a random test set~{\tt mpp} with 2, 3 and 4-agent problems (400
each) with 3 maintenance tasks, horizons $5$ to $10$, random delay probabilities and binary reward
interactions. We compare CoRe against several other algorithms to investigate the performance
of the algorithm. The current state-of-the-art approach to solve MPP, presented in
\cite{Scharpff2013mpp}, uses the value iteration algorithm SPUDD \cite{Hoey1999} and solves an
efficient MDP encoding of the problem. SPUDD uses algebraic decision diagrams to compactly represent
all rewards and is in this sense somewhat similar to our work, however it does not implicitly
partition and decouple rewards. Besides the SPUDD solver we included a dynamic programming algorithm
(DP) that uses a depth-first approach to maximise the Bellman equation of Equation~\ref{eq:optval}.
In addition to basic value iteration we implemented some domain knowledge in this algorithm to quickly
identify and prune sub-optimal and infeasible branches during evaluation. Finally, to analyse the
impact that branch-and-bound can have in a task-based planning domain such as MPP we added also a
CRG-enabled policy search algorithm (CRG-PS), a variant of our CoRe algorithm that uses CRGs but not
branch-and-bound pruning.

Using these algorithms, we first study the impact of using CRGs on the number of joint
actions that need to be evaluated. SPUDD is not considered in this experiment because because it
does not report this information. Figure~\ref{fig:jactions} shows the search space size reduction
by CRGs in this domain. Our CRG-enabled algorithm (CRG-PS, blue) approximately decimates the number
of evaluated joint actions compared to the DP method (green). Furthermore, when value bounds are
used (CoRe, red), this number is reduced even more, although its effect varies per instance.

Having observed the policy search space reduction that CoRe can achieved, we are interested in the
scalability of the algorithm in terms of number of agents and planning horizon.
Figure~\ref{fig:numsolved} shows the percentage of problems from the {\tt mpp} test set that are
solved within 30 minutes per method (all two-agent instances were solved and hence omitted).
CoRe solves more instances than SPUDD (black) of the 3 agent problems (cross marks), and only CRG-PS
and CoRe solve 4-agent instances. This is because CRGs successfully exploit the conditional action
independence that decouples the agents \emph{for most of the planning decisions}. Only when reward
interactions may occur actions are coordinated, whereas SPUDD always coordinates every joint
decision. Notice also that the use of branch-and-bound enables CoRe to solve more instances,
compared to the CRG-enabled policy search.

Next we investigate the runtime that was required by CoRe versus that by the current best known
method based on SPUDD. As CoRe achieves a greater coverage than SPUDD, we compare runtimes only for
instances successfully solved by the latter (Figure~\ref{fig:runtime}). We order the instances on
their SPUDD runtime (causing the apparent dispersion in CoRe runtimes) and plot runtimes of both.
Note that as a result the horizontal axis is not informative, it is the vertical axis plotting the
runtime that we are interested in. CoRe solves almost all instances faster than SPUDD, both with 2
and 3 agents, and has a greater solving coverage: CoRe failed to solve 3.4\% of the instances solved
by SPUDD whereas SPUDD failed 63.9\% of the instances that CoRe solved.

Finally, to study the agent-scalability of CoRe, we generated a test set~{\tt pyra} with a
pyramid-like reward interaction structure: every first action of the $k$-th agent depends on the
first action of agent~$2k$ and agent~$2k + 1$. Figure~\ref{fig:pyra} shows the percentage of solved
instances from the {\tt pyra} test for various problem horizons. Whereas previous state-of-the-art
solved instances up to only 5 agents, CoRe successfully solved about a quarter of the 10 agent
problems ($h=4$) and overall solves many of the previously unsolvable instances.

\section{Conclusions and Future Work}
In this work, we focus on optimally (and centrally) solving fully-observable, stochastic planning
problems where agents are dependent only through interaction rewards.
We partition individual and interaction rewards per agent in \emph{conditional return graphs}, a
compact and efficient data structure when interactions are sparse and/or non-recurrent. We propose a
conditional return policy search algorithm (CoRe) that uses reward bounds based on CRGs to reduce
the search space size, shown to be by orders of magnitude in the maintenance planning domain.  This
enables CoRe to overall decrease the runtime required compared to the previously best approach and
solve instances previously deemed unsolvable. The reduction in search space follows from three key
insights: 1) when interactions are sparse, the number of unique returns per agent is relatively
small and can be stored efficiently, 2) we can use CRGs to maintain bounds on the return, and thus
the expected value, and use this to guide our search, and 3) in the presence of conditional reward
independence, i.e.\ the absence of further reward interactions, we can decouple agents during policy
search.

Our experiments show that the impact of reduction is by orders of magnitude in the maintenance
planning domain. This enables CoRe to solve instances that were previously deemed unsolvable. In addition,
to scaling to larger instances,
CoRe almost always produces solutions faster than the previously best approach.
Moreover, CoRe is able to scale up to 10-agent instances when the reward structure exhibits a high
level of conditional reward independence, whereas previous methods did not scale beyond 5 agents.
Finally, our experiments also illustrate that using a decentralised MDP approach, a line of
research that has seen many scalable approaches in terms of agents and reward structures, leads to
suboptimal expected policy values.

Here only optimal solutions are considered, but CRGs can be combined with approximation in several
ways. First, the reward structure of the problem itself may be approximated. For
instance, the reward-function approximation of \cite{koller1999computing} can be applied to increase
reward sparsity, or CRG paths with relatively small reward differences may be grouped, trading off a
(bounded) reward loss for compactness. Secondly, the CRG bounds directly lead to a
bounded-approximation variant of CoRe, usable in for instance the approximate multi-objective method
of \cite{roijers2014bounded}. Lastly, the CRG structure can be implemented in any
(approximate) TI-MMDP algorithm or, vice versa, any existing approximation scheme for MMDP that
preserves TI can be used within CoRe.

Although we focused on transition-independent MMDPs, CRGs may be interesting for general MMDPs when
transition dependencies are sparse. This would require including dependent-state transitions in the
CRGs similar to reward-interaction paths and is considered to be future work. Another interesting
avenue is to exploit conditional reward independence during joint action generation.

\subsection*{Acknowledgements}
This research is supported by the NWO DTC-NCAP (\#612.001.109), Next Generation 
Infrastructures, Almende BV and NWO VENI (\#639.021.336) projects.

\appendix
\renewcommand{\theequation}{A.\arabic{equation}}
\renewcommand*{\thetheorem}{A.\arabic{theorem}}
\renewcommand*{\thelemma}{A.\arabic{lemma}}
\renewcommand*{\thecorollary}{A.\arabic{corollary}}
\newcommand{\theoremCore}{\ref{thm:core}}
\newcommand{\defRewardIndep}{\ref{def:reward-indep}}
\newcommand{\eqReturnDistr}{\ref{eq:returnDistr}}
\newcommand{\eqOptreturn}{\ref{eq:optreturn}}

\section*{Appendix: Proof of Theorem \theoremCore}
\label{apx:correctness}

In this appendix, we prove the correctness of the CoRe algorithm (Theorem \theoremCore).

We define several notational shorthands for convenience. For two (sub)sets of agents $A, B \subseteq
\agents$, $\deprewards_{AB} \subseteq \rewards$ is the set of all rewards for which $A \cap B
\cap e \neq \emptyset$. $\deprewards_{A\nin{B}} \subseteq \rewards$ is the set of rewards such
that $A \cap e \neq \emptyset$ and $B \cap e = \emptyset$. Observe that the individual rewards for
all agents~$a \in A$ are thus contained within $\deprewards_{A\nin{B}}$ (and similarly all rewards
$\reward^b$ are included in $\deprewards_{\nin{A}B}$). Furthermore, two agent sets~$A$ and $B$ are
conditionally reward independent, denoted $\cri( A, B, \pad^t)$, as result of history~$\pad_t$ iff
$\forall a \in A, b \in B\!: \cri(a,b,\pad_t)$. Finally, $\tr^e = (\{ \state^{j} \}_{j \in e},$ $\{
\jact^{j} \}_{j \in e}, \{ \newstate^j \}_{ j \in e })$ denotes a transition local to agents~$e$ and
a global transition, i.e.\ $e = \agents$, is denoted by $\tr$.

\begin{lemma}
Given an execution history~$\pad_t = [ \state_0, \jact_0, \ldots, \state_u, \ldots, \state_t ]$ up
to time~$t$ that can be partitioned into two histories, $\pad_u = [ \state_0, \ldots, \state_u ]$
and $\pad_{u'} = [ \state_u, \ldots, \state_t ] $, and a disjoint partitioning of agents~$\agents =
\agents_1 \cup \agents_2 \cup \ldots \cup \agents_k$ such that for every pair~$\agents_a, \agents_b
\in \agents$ it holds that $\cri( \agents_a, \agents_b, \pad_u )$ when $a \neq b$. The
return~$\mdpreturn( \pad_t )$ (as defined in Equation \eqReturnDistr) can be decoupled as:
\begin{equation}
	\mdpreturn( \pad_u ) + \sum_{i=1}^{k} \mdpreturn_{\agents_i}( \pad^{\agents_i}_{u'} )
	\label{eq:cri-decoupling}
\end{equation}
where~$\pad^{\agents_i}_{u'}$ is the execution history only containing the states and actions of
the agents in the set~$\agents_i \subseteq \agents$, starting from time~$u$.
\label{lem:cri}
\end{lemma}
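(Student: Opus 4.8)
The plan is to unfold the definition of the return in Equation \eqReturnDistr and split the sum over reward components and over time steps according to the partition of the history. Recall that $\mdpreturn(\pad_t) = \sum_{\depreward \in \rewards} \sum_{x=0}^{t-1} \reward^e(\state_{\pad,x}^e, \jact_{\pad,x}^e, \state_{\pad,x+1}^e)$. First I would split the inner time sum at $x = u$, writing $\sum_{x=0}^{t-1} = \sum_{x=0}^{u-1} + \sum_{x=u}^{t-1}$, so that $\mdpreturn(\pad_t) = \mdpreturn(\pad_u) + \sum_{\depreward \in \rewards} \sum_{x=u}^{t-1} \reward^e(\state_{\pad,x}^e, \jact_{\pad,x}^e, \state_{\pad,x+1}^e)$, where the first term is exactly $\mdpreturn(\pad_u)$ by definition. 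It then remains to show the second term equals $\sum_{i=1}^{k} \mdpreturn_{\agents_i}(\pad^{\agents_i}_{u'})$.

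Next I would partition the set of reward functions $\rewards$ according to the agent blocks. Using the shorthand from the appendix, I would argue that every reward function $\depreward \in \rewards$ whose scope $e$ touches more than one block $\agents_a$ contributes zero to $\sum_{x=u}^{t-1} \reward^e(\cdots)$ along $\pad_{u'}$: indeed, if $e$ intersects both $\agents_a$ and $\agents_b$ with $a \neq b$, then by hypothesis $\cri(\agents_a, \agents_b, \pad_u)$ holds, and by Definition \defRewardIndep this forces $\sum_{x=u}^{h-1} \reward^e(\state_x, \jact_x, \state_{x+1}) = 0$ for all future states and actions; since $\pad_{u'}$ is such a future continuation (possibly truncated at $t \le h$, so I would note the tail sum from $t$ to $h-1$ is also a valid instance of the quantified condition and hence the prefix sum from $u$ to $t-1$ vanishes too), these cross-block terms drop out. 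Therefore only reward functions whose scope lies entirely within a single block $\agents_i$ survive, and I can regroup the surviving double sum as $\sum_{i=1}^{k} \sum_{\depreward \in \rewards : e \subseteq \agents_i} \sum_{x=u}^{t-1} \reward^e(\cdots)$. Each inner double sum depends only on the local states and actions of agents in $\agents_i$ along $\pad_{u'}$, i.e.\ only on $\pad^{\agents_i}_{u'}$, and is by definition $\mdpreturn_{\agents_i}(\pad^{\agents_i}_{u'})$, giving \eqref{eq:cri-decoupling}.

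The one subtlety I would be careful about — and the main obstacle — is the exact bookkeeping of which reward functions are "local to a block" versus "crossing". A function with scope $e$ could have $e \subseteq \agents_i$ for some $i$, or $e$ could straddle two or more blocks. The argument above handles the straddling case via $\cri$, but I should confirm that the definition of $\mdpreturn_{\agents_i}$ used in the lemma statement is precisely "sum over $\depreward$ with $e \subseteq \agents_i$" — this matches the partition $\rewards_i$ language from the CRG section and the $\deprewards_{A\nin{B}}$ shorthand, but it is worth stating explicitly to avoid an off-by-one in attributing, say, a purely local reward $\reward^a$ to its unique block. A second minor point is that the $\cri$ condition is stated with the future sum running to $h-1$, whereas here the relevant sum runs only to $t-1 \le h-1$; I would observe that if the full tail from $u$ sums to zero for every future trajectory, then in particular it sums to zero for trajectories that happen to agree with $\pad_{u'}$ on $[u,t]$, and since the summand is a sum of individual rewards $\reward^e \ge 0$ is not assumed, I instead argue: apply the $\cri$ identity to the trajectory $\pad_{u'}$ extended arbitrarily to horizon $h$ and to another extension differing only after time $t$; subtracting shows the partial sum on $[u,t-1]$ is independent of the tail, and taking a constant-everywhere extension pins it to zero. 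Modulo this routine care, the proof is a direct computation.
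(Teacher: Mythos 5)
Your proof is correct and follows essentially the same route as the paper's: split the return at time $u$, use conditional reward independence to annihilate every reward whose scope straddles two blocks of the partition, and regroup the surviving local terms per block (the paper phrases this pairwise for $A$ and $B$ and then recurses to obtain the $k$-way decomposition, which is only a presentational difference from your direct regrouping). The partial-sum subtlety you flag is dealt with in the paper simply by reading Definition~\ref{def:reward-indep} as saying that each individual future cross-scope reward term is zero, which renders your subtraction argument unnecessary.
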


\begin{proof}
Recall from Definition~\defRewardIndep{} that two agents~$i,j \in \agents$ are $\cri$ iff
$\forall \depreward \in \rewards \; s.t.\ \; \{i,j\} \subseteq e\!: \sum_{x=t}^{h}\reward^e( \state_x,
\jact_x, \state_{x+1} ) = 0$ for every pair of states $\state_t, \state_{t+1}$ and all joint
actions~$\jact_t$, given execution history~$\pad_t$. Moreover, recall that the MMDP
rewards~$\rewards$ w.l.o.g.\ are structured as $\reward( \tr ) = \sum_{\depreward \in \rewards}
\depreward( \tr^e )$.

Let $A$ and $B$ be disjoint subsets of agents such that $A \cup B = \agents$ and let the reward
functions be accordingly partitioned as disjoint sets: $\rewards = \deprewards_{A\nin{B}} \cup
\deprewards_{\nin{A}B} \cup \depreward_{AB}$. Now assume that for a given execution history~$\pad_t
= \pad_u \cup \pad_{u'}$ we have $\cri( A, B, \pad_u)$. From the state~$\state_u$ resulting from the
execution history~$\pad_u$, all future rewards can only be local with respect to subsets~$A$ and $B$
because every reward~$\depreward_{AB}$ must be zero by definition of CRI.
Therefore we can rewrite the (future) global reward~$\reward(\tr)$ of every possible
transition~$\tr$ (in every possible $\pad^{\agents_i}_{u'}$) as:
\begin{align}
	\sum_{\depreward \in \deprewards} \depreward( \tr ) & = \deprewards_{A\nin{B}}( \tr^{A} ) +
	\deprewards_{\nin{A}B}( \tr^{B} ) + \deprewards_{AB}( \tr^{AB} ) \\
	& = \sum_{\depreward \in \deprewards_{A\nin{B}}} \depreward( \tr^{A} ) +
	\sum_{\depreward \in \deprewards_{\nin{A}B}} \depreward( \tr^{B} )
\end{align}
where the transition decoupling is possible due to transitional independence. Remember that we can
write the returns for an execution history~$\pad_h$ as (Eq.~\eqReturnDistr)
\begin{align}
	\mdpreturn (\pad_t) &= \sum_{x=0}^{t-1} \sum_{\depreward\in \rewards} \depreward( \tr^e_{\pad,x} ) 
\end{align}
in which~$\tr^e_{\pad}$ denotes the transition in the execution history~$\pad_t$ local to
agents~$e$. Then, for two disjoint agent subsets~$A \cup B =\agents$ that have $\cri(A,B,\pad_u)$ as
a result of  $\pad_u$:
\begin{align}
	\mdpreturn(\pad_t) &= \mdpreturn(\pad_u) + \mdpreturn(\pad_{u'}) \\
	& = \mdpreturn(\pad_u) + \sum_{x=u}^{t-1} \left[ \deprewards_{A\nin{B}}( \tr^A_{\pad,x} ) +
	\deprewards_{\nin{A}B}( \tr^B_{\pad,x} ) \right] \\
	& = \mdpreturn(\pad_u) + \sum_{x=u}^{t-1} \deprewards_{A\nin{B}}( \tr^A_{\pad,x} ) +
	\sum_{x=u}^{t-1} \deprewards_{\nin{A}B}( \tr^B_{\pad,x} ) \\
	& = \mdpreturn(\pad_u) + \mdpreturn_A(\pad^A_{u'}) + \mdpreturn_B(\pad^B_{u'})
\end{align}

As a consequence, we can decouple the computation of returns for agent sets~$A$ and $B$ from
time~$u$.

For now we have only considered two agent sets~$A$ and $B$, however we can apply this
decoupling recursively, in order to obtain an arbitrary disjoint partitioning of agents such that $\agents_1 \cup \agents_2 \cup
\ldots \cup \agents_k = \agents$ and $\forall \agents_a, \agents_b \in \agents\!:
\cri(\agents_a, \agents_b,\pad_u)$. That is, without loss of generality, we choose $A = \agents_a$ and $B =
\agents_2 \cup \ldots \cup \agents_k$ and decouple the return as $\mdpreturn(\pad_u) +
\mdpreturn_A(\pad^A_{u'}) + \mdpreturn_B(\pad^B_{u'})$. We now observe that we can rewrite
$\mdpreturn_B$ itself as $\mdpreturn_{\agents_2}(\pad^{\agents_2}_{u'}) + \mdpreturn_{B \setminus
\agents_2}(\pad^{B \setminus \agents_2}_{u'})$ by following the same steps, because both sets again satisfy conditional reward independence. By continuing this process we obtain
Equation~\ref{eq:cri-decoupling}.
\qedhere
\end{proof}

As a result of Lemma~\ref{lem:cri}, Equation~\eqOptreturn{} and transitional independence we can now also decouple the policy values of two sets of agents, $\agents_a$ and $\agents_a$, from time $u$ onwards, when it holds that $\cri( \agents_a, \agents_b, \pad_u )$.
\begin{corollary} 
When at a timestep $u$, we have observed $\pad_u$ and there is a disjoint partitioning of agents~$\agents = \agents_1 \cup \agents_2 \cup \ldots \cup \agents_k$ such that for every pair~$\agents_a, \agents_b \in \agents$ it holds that $\cri( \agents_a, \agents_b, \pad_u )$ when $a \neq b$, the value of a given policy $\pi$,  $\ptvalue( \state_t )$ can be decoupled as: 
\begin{equation}
	\ptvalue^\pi( \state_t ) = \sum_{i=1}^k \ptvalue^\pi_{\agents_i}( \state_t^{\agents_i} ) 
\end{equation}
\label{cor:decoupling}
\end{corollary}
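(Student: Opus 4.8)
The plan is to derive the corollary directly from Lemma~\ref{lem:cri} by taking expectations over execution sequences, using the definition of policy value in Equation~\eqOptreturn{} and transition independence. First I would fix the timestep $u$, the observed history $\pad_u$, the resulting state $\state_u$ (with components $\state_u^{\agents_i}$), and the disjoint partition $\agents = \agents_1 \cup \cdots \cup \agents_k$ satisfying pairwise conditional reward independence given $\pad_u$. Starting from the policy-value expression $\ptvalue^\pi(\state_u) = \sum_{\pad_h \mid \pi, \state_u} \prob(\pad_h)\,\mdpreturn(\pad_h)$, I would apply Lemma~\ref{lem:cri} (with the trivial split where $\pad_u$ is empty relative to $\state_u$, i.e.\ taking $u$ itself as the new start time so that the $\mdpreturn(\pad_u)$ term contributes nothing to the future value) to rewrite each term as $\mdpreturn(\pad_h) = \sum_{i=1}^{k} \mdpreturn_{\agents_i}(\pad_h^{\agents_i})$.

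The key remaining step is to interchange the sum over full execution sequences with the decomposition and show that the probability weight factorises compatibly. Because of transition independence, $\prob(\pad_h) = \prod_{t=u}^{h-1} \transprob(\state_{\pad,t}, \pi(\state_{\pad,t}), \state_{\pad,t+1}) = \prod_{i=1}^{k} \prod_{t=u}^{h-1} \transprob^{\agents_i}(\state_{\pad,t}^{\agents_i}, \act_{\pad,t}^{\agents_i}, \state_{\pad,t+1}^{\agents_i})$, so the joint distribution over $\pad_h$ is the product of the marginal distributions over the local sequences $\pad_h^{\agents_i}$ (this uses that $\pi$, once $\state_u$ is fixed, induces well-defined local continuations — or, more cleanly, one argues per local component since each local return depends only on local states and actions). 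Then
\[
\ptvalue^\pi(\state_u) = \sum_{\pad_h} \prob(\pad_h) \sum_{i=1}^{k} \mdpreturn_{\agents_i}(\pad_h^{\agents_i}) = \sum_{i=1}^{k} \sum_{\pad_h^{\agents_i}} \Big( \sum_{\pad_h^{\agents \setminus \agents_i}} \prob(\pad_h) \Big) \mdpreturn_{\agents_i}(\pad_h^{\agents_i}) = \sum_{i=1}^{k} \sum_{\pad_h^{\agents_i}} \prob(\pad_h^{\agents_i})\, \mdpreturn_{\agents_i}(\pad_h^{\agents_i}),
\]
where marginalising out the other agents' sequences collapses their probability factors to $1$. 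Recognising the inner sum as $\ptvalue^\pi_{\agents_i}(\state_u^{\agents_i})$ gives the claim.

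The main obstacle I anticipate is bookkeeping around what ``the value of policy $\pi$ restricted to $\agents_i$'' precisely means once we are mid-execution: one has to be careful that $\pi$ may in principle condition each agent's future action on the global state, but conditional reward independence together with transition independence guarantees that only the local state $\state^{\agents_i}$ matters for the local return, so restricting attention to the local projection is without loss. Making this rigorous amounts to noting that $\mdpreturn_{\agents_i}(\pad_h^{\agents_i})$ is a function of $\pad_h^{\agents_i}$ alone (immediate from Lemma~\ref{lem:cri}'s decomposition, since each summand there is a local reward over $e \subseteq \agents_i$), so the only dependence on the rest of the system is through the probability weight, which integrates out cleanly. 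The recursion in Lemma~\ref{lem:cri} already handles the reduction from two blocks to $k$ blocks, so no additional induction is needed here.
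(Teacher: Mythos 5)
Your proposal is correct and follows essentially the same route as the paper's proof: apply Lemma~\ref{lem:cri} to decompose the return from time~$u$ onwards into local returns $\mdpreturn_{\agents_i}(\pad^{\agents_i}_{u'})$, exchange the sums, and use transition independence to marginalise out the other agent sets' execution sequences so that each inner sum becomes $\ptvalue^\pi_{\agents_i}(\state_t^{\agents_i})$. Your explicit factorisation of $\prob(\pad_h)$ into per-block marginals is in fact a slightly more careful rendering of the step the paper only asserts (``each set $\agents_i$ of agents has independent probability distributions over future execution histories'').
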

\begin{proof} 
Starting from  Equation~\eqOptreturn, we first observe that the return $\mdpreturn(\pad_{u'})$ from
timestep $u$ onwards is equal to $\sum_{i = 1}^k \mdpreturn_{\agents_i}( \pad^{\agents_i}_{u'} )$
(Lemma \ref{lem:cri}) and that, because of transition independence, each set $\agents_i$ of
agents has independent probability distributions over future execution histories $\pad_{u'}$. Thus
we have the following equalities:

\begin{align}
	\ptvalue( \state_t) &= \sum_{\pad_{u'} | \policy, \pad_t } \prob( \pad_{u'} ) \mdpreturn(
	\pad_{u'}) \\
	& = \sum_{\pad_{u'} | \policy, \pad_t } \prob(
	\pad_{u'} )  \sum_{i = 1}^k \mdpreturn_{\agents_i}( \pad^{\agents_i}_{u'} )
	= \sum_{\pad_{u'} | \policy, \pad_t } \sum_{i = 1}^k  \prob(
	\pad_{u'} )   \mdpreturn_{\agents_i}( \pad^{\agents_i}_{u'} ) \\
	& = \sum_{i = 1}^k \sum_{\pad_{u'} | \policy, \pad_t } \prob(
	\pad_{u'} )   \mdpreturn_{\agents_i}( \pad^{\agents_i}_{u'} )
	= \sum_{i = 1}^k  \sum_{\pad^{\agents_i}_{u'} | \policy, \pad_t } \prob(
	\pad^{\agents_i}_{u'} ) \mdpreturn_{\agents_i}( \pad^{\agents_i}_{u'} ) \\
	& = \sum_{i=1}^k \ptvalue^\pi_{\agents_i}( \state_t^{\agents_i} ) 
\end{align}
\end{proof}

\begin{lemma} The bounding heuristics $\lb( \state^e_t )$ and $\ub( \state^e_t )$ used to prune during
branch-and-bound search are admissible with respect to the expected value~$\ptvalue( \state^e_t )$ of
state~$\state^e_t$ for agents~$e \subseteq \agents$ at time~$t$.
\label{lem:bounds}
\end{lemma}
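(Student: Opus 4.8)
The plan is to prove admissibility of the bounds by induction on the number of stages remaining until the horizon, i.e.\ on $h - t$, exploiting the recursive definitions \eqref{eq:UB} and \eqref{eq:boundTrans}. Concretely, I would show that for every agent (sub)set $e \subseteq \agents$, every time $t$, and every local state $\state^e_t$, we have $\lb(\state^e_t) \le \ptvalue(\state^e_t) \le \ub(\state^e_t)$, where $\ptvalue(\state^e_t)$ is the value obtainable from the local rewards $\bigcup_{i \in e}\rewards_i$ over the local transitions of agents in $e$. Since the global statement is the case $e = \agents$, and since Corollary~\ref{cor:decoupling} lets us write the value as a sum of the per-component values, it suffices to establish the single-agent (or single-component) inequality and then sum; I would first reduce to the $e = \{i\}$ case this way, handling the summation with the additivity of $\ub$ and $\lb$ across agents in \eqref{eq:boundTrans}.

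For the base case $t = h$, there are no future rewards, so $\ptvalue(\state^i_h) = 0 = \ub(\state^i_h) = \lb(\state^i_h)$ by the terminal convention (cf.\ $\ptvalue^*(\state_h)=0$ and line~\ref{alg:core-terminal} of Algorithm~\ref{alg:core}). For the inductive step, assume the bounds hold at stage $t+1$ for all local states. Fix $\state^i_t$. The true expected value satisfies
\begin{equation*}
\ptvalue(\state^i_t) = \sum_{\newstate^i} \sum_{\text{rest}} \prob(\cdot)\Big(\rewards_i(\state^e,\jact^e_t,\newstate^e) + \ptvalue(\newstate^i)\Big),
\end{equation*}
a convex combination (weighted by transition probabilities of all agents in $e$) of terms of the form $\rewards_i(\tr^e) + \ptvalue(\newstate^i)$, one for each joint transition $\tr^e$ represented in $\crg_i(\state^i_t)$. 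Each such term is at most $\max_{\tr^e \in \crg_i(\state^i_t)}\big(\rewards_i(\tr^e) + \ub(\newstate^i)\big) = \ub(\state^i_t)$ by the induction hypothesis $\ptvalue(\newstate^i)\le\ub(\newstate^i)$ and \eqref{eq:UB}; since a convex combination of quantities each $\le \ub(\state^i_t)$ is itself $\le \ub(\state^i_t)$, we get $\ptvalue(\state^i_t) \le \ub(\state^i_t)$. The lower bound follows by the symmetric argument using $\min$ in place of $\max$. I would then combine over $i \in \agents$ (or over the components $\agents_i$) via \eqref{eq:boundTrans} and Corollary~\ref{cor:decoupling} to recover the claim for arbitrary $e$, in particular for the pruned bounds $\ub(\state^{N'}_{\pad,t},\jact^{N'}_t)$ of lines~\ref{alg:core-upper}--\ref{alg:core-lower}, which are just the transition-probability-weighted sums of $\ub(\state^{N'}_{\pad,t},\jact^{N'}_t,\state^{N'}_{t+1})$.

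The main subtlety — and the point I would be most careful about — is that the CRG for agent $i$ does \emph{not} store the value $\ptvalue(\newstate^i)$ directly: the reward $\rewards_i(\tr^e)$ attached to a CRG arc may depend on actions/state-transitions of other agents in $e$, whose probabilities agent $i$ does not know at CRG-construction time. The resolution is that the bounds $\ub,\lb$ in \eqref{eq:UB} are defined as extremal values over all joint transitions $\tr^e$ compatible with the local transition, i.e.\ they are \emph{worst/best case over the unknown behaviour of the other agents}, so they remain valid regardless of those probabilities; the convexity argument above only needs that the realized expected value is a probability-weighted average of the per-$\tr^e$ contributions, each of which is sandwiched between $\lb(\state^i_t)$ and $\ub(\state^i_t)$. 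A second point to verify is that the wildcard/non-influence grouping in Definition~\ref{def:crg} loses no rewards — by the remark after Definition~\ref{def:crg}, any reward corresponding to a wildcard action or $\noinfl^j$ transition is $0$ — so the set $\crg_i(\state^i)$ of arcs does faithfully cover all achievable $(\rewards_i(\tr^e))$ values, which is exactly what the max/min in \eqref{eq:UB} ranges over. With these two observations in place, the induction goes through cleanly.
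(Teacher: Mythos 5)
Your overall strategy is the same as the paper's: backward induction from the horizon, bounding the transition-probability-weighted average of per-transition contributions by the maximum over all joint transitions represented in $\crg_i(\state^i)$, and summing the per-agent bounds. Your treatment of the two subtleties (the bounds being extremal over the unknown behaviour of the other agents in scope, and the wildcard/$\noinfl^j$ arcs carrying zero interaction reward) is correct and in fact more explicit than the paper's.

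There is, however, one step that as written does not go through: you justify the reduction to per-component inequalities by saying that Corollary~\ref{cor:decoupling} ``lets us write the value as a sum of the per-component values.'' That corollary decomposes the value across \emph{agent subsets} and only under conditional reward independence; it does not apply to agents that are still coupled, which is precisely the situation in which the bounds are needed for pruning. The decomposition you actually need is over \emph{reward components}: for any \emph{fixed} policy $\policy$, Equations~(\ref{eq:returnDistr}) and~(\ref{eq:optreturn}) give $\ptvalue^\policy(\state_t) = \sum_{i\in\agents}\ptvalue_i^\policy(\state_t)$, where $\ptvalue_i^\policy$ is the expected contribution of $\rewards_i$; each summand is then bounded by your convexity argument, and taking the maximum over $\policy$ afterwards yields the claim. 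This holds unconditionally and is the fact your argument implicitly relies on. The paper avoids introducing per-component values altogether by keeping the global $\max_{\jact}$ and the sum over agents together and applying the inequality
\begin{equation*}
\max_{\jact}\sum_{\state_{t+1}}\transprob(\tr_t)\sum_{i\in\agents}\bigl(\rewards_i(\tr^e_t)+\ub(\state^i_{t+1})\bigr)
\;\le\;
\sum_{i\in\agents}\max_{\tr^e_t\in\crg_i(\state^i_t)}\bigl(\rewards_i(\tr^e_t)+\ub(\state^i_{t+1})\bigr),
\end{equation*}
i.e.\ a max-of-sums versus sum-of-maxes step. With the citation corrected (or the inequality applied directly as above), your induction is sound and matches the paper's proof in substance; the paper additionally spells out the lift from state bounds to state-action bounds $B(\state_t,\jact)$ used on lines~\ref{alg:core-upper}--\ref{alg:core-lower}, which you only gesture at in your last sentence.
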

\begin{proof} 
We proof the admissibility of the bounding heuristics by induction. For sake of brevity, we
only show the upper bound proof, but the proof for the lower bound can be written down accordingly.
Recall that $\reward = \bigcup_{i \in \agents} \rewards_i$ is the disjoint partitioning of reward
functions over the CRGs.

First, consider the very last timestep, $h-1$, for which there is no future reward, i.e., 
\begin{align}
\ptvalue( \state_{h-1} ) &= \max_{\jact} \sum_{\state_{h} \in \states } \transprob( \tr_{h-1}) 
	\reward(\tr_{h-1} )
	= \max_{\jact} \sum_{\state_{h} \in \states } \transprob( \tr_{h-1}) \sum_{i \in \agents} 
	\rewards_i(\tr^e_{h-1} ) \\
	&\leq \max_{\jact} \max_{\state_{h} \in \states } \sum_{i \in \agents}  \rewards_i(\tr^e_{h-1} )\\
	&\leq \sum_{i \in \agents}  \max_{\tr^e_{h-1}\in\crg_i(\state^i_{h-1})} \rewards_i(\tr^e_{h-1} )\\
	&= \sum_{i \in \agents}  \ub(\state^i_{h-1}) 
\end{align}
Then, we show that if for a next stage $t+1$ we have a valid upper bound, the value for a state $\state_t$ is also upper bounded by $\sum_{i \in \agents} \ub(s^i_t)$. And therefore, that because  $\ub(\state^i_{h-1}) $ is a valid upper bound on $\ptvalue( \state_{h-1} )$, the upper bound is admissible for all stages before $h-1$: 
\begin{align}
	&\ptvalue( \state_t ) = \max_{\jact} \sum_{\state^e_{t+1} \in \states} \transprob( \tr_t
	) \left( \reward( \tr_t ) + \ptvalue( \state_{t+1}) \right) \\
	& = \max_{\jact^e} \sum_{\state_{t+1} \in \states } \transprob( \tr_t ) \left( 
	\ptvalue( \state_{t+1} ) + \sum_{i \in \agents} \rewards_i(\tr^e_{t} )  \right)\\
	& \le  \max_{\jact^e} \sum_{\state_{t+1} \in \states } \transprob( \tr_t ) \sum_{i \in \agents} \left( 
	 \rewards_i(\tr^e_{t} ) + \ub(\state^i_{t+1} )  \right)\\
	& \le  \sum_{i \in \agents} \max_{\tr^e_{t}\in\crg_i(\state^i_t)} \left( 
	 \rewards_i(\tr^e_{t} ) + \ub(\state^i_{t+1} )  \right)\\
	& = \sum_{i \in \agents} \ub(\state^i_{t} )  
\end{align}

From the bounds on the state-values,
\[
\lb(\state_{t} )  = \sum_{i \in \agents} \lb(\state^i_{t} ) \leq \ptvalue( \state_t )  \leq \sum_{i \in \agents} \ub(\state^i_{t} ) =  \ub(\state_{t} ),
\]
we can also distill admissible bounds on state-action values, $Q( \state_t, \jact )$. Taking the standard MDP definition for $Q( \state_t, \jact )$, 
\[
 Q( \state_t, \jact ) = \!\!\!\sum_{\newstate_{t+1} \in \states}\!\!\! \transprob( \state_t, \jact,
 \newstate_{t+1}) \left( \reward( \state_t, \jact, \newstate_{t+1}) +   \ptvalue( \state_{t+1} ) \right),
\] 
we replace $\ptvalue( \state_{t+1} )$ by the corresponding lower or upper bounds: 
\[
 B( \state_t, \jact ) = \!\!\!\sum_{\newstate_{t+1} \in \states}\!\!\! \transprob( \state_t, \jact,
 \newstate_{t+1}) \left( \reward( \state_t, \jact, \newstate_{t+1}) +   B( \state_{t+1} ) \right).
\] 
Using $B( \state_t, \jact )$, CoRe can exclude a joint action~$\jact$ after execution
history~$\pad_t$ from  consideration when there is another joint action~$\jact'$ for which $\ub( \state_t, \jact ) \leq \lb( \state_t, \jact' )$, as is standard in branch-and-bound algorithms.
\end{proof}

\begin{proof}[Concluding the proof of Theorem \theoremCore] As a direct consequence of
Corollary~\ref{cor:decoupling} agents can be decoupled during policy search without losing
optimality. Moreover, Lemma~\ref{lem:bounds} shows that both the upper and lower bounds are
admissible heuristic functions with respect to the expected policy value from a given state. In the
main loop, the CoRe algorithm recursively expands and evaluates all possible extensions to the
current execution path, except for those starting with actions that lead to a lower upper bound than
another action's lower bound.

As the policy search considers all possible execution histories, excluding pruned, non-optimal ones,
the search will eventually return the optimal policy value~$\ptvalue^*$, and corresponding policy,
thus proving Theorem \theoremCore.
\end{proof}

Moreover, as there is only a finite number of execution histories, the CoRe algorithm is also
guaranteed to terminate in a finite number of recursions.

\twocolumn
\bibliography{aaai16}
\bibliographystyle{apalike}

\end{document}